\documentclass{article}

\usepackage[main, final]{neurips_2026}

\usepackage[utf8]{inputenc} 
\usepackage[T1]{fontenc}    
\usepackage{hyperref}       
\usepackage{url}            
\usepackage{booktabs}       
\usepackage{amsmath}        
\usepackage{amsfonts}       
\usepackage{amssymb}        
\usepackage{amsthm}         
\newtheorem{proposition}{Proposition}
\usepackage{nicefrac}       
\usepackage{microtype}      
\usepackage{xcolor}         
\usepackage{graphicx}       
\usepackage{subcaption}     
\usepackage{wrapfig}        
\usepackage{multirow}       

\title{Synthesis Through Simulation: Generating Coherent Enterprise Data via Scalable Agent-System Interaction}

\author{%
  \textbf{Yipeng Li\thanks{Equal contribution.}} \hspace{2em}
  \textbf{Ashutosh Hathidara\footnotemark[1]} \hspace{2em}
  \textbf{Jane Lo} \\[0.5em]
  \textbf{Harshavardhan Abichandani} \hspace{2em}
  \textbf{Gunraj Singh\thanks{Work done as part of an internship at SAP.}} \hspace{2em}
  \textbf{Atin Ghosh} \\[0.8em]
  \normalfont 
  SAP Labs \\
  \texttt{\{yipeng.li, ashutosh.hathidara, jane.lo\}@sap.com} \\
  \texttt{\{harshavardhan.abichandani, atin.ghosh\}@sap.com} \\
  \texttt{gunraj262005@gmail.com} \\
}

\begin{document}

\maketitle

\begin{abstract}
Tool-calling agents have become central to enterprise AI, yet training and evaluating them at scale remains severely constrained due to business and legal restrictions on enterprise systems, data, and database schemas. Tabular data synthesis offers a natural alternative, but its effectiveness is fundamentally limited by structural validity and schema availability, while procedure-based approaches yield the opposite weakness, typically lacking distributional fidelity without per-domain authoring. We introduce \textbf{Synthesis Through Simulation} (STS), a \textbf{schema--free} data synthesis paradigm in which an LLM agent generates data by executing operations against policy-enforcing APIs within simulated enterprise environments. Because data is generated through the same environment that defines what is valid, STS guarantees structural validity by construction while decoupling validity enforcement from distribution modeling, allowing each to be addressed independently. The \textbf{Generalist Populator} (GP), STS's domain-agnostic agent, addresses the remaining challenges of distributional fidelity and synthesis scalability: GP achieves \textbf{0.88 }average marginal fidelity and \textbf{100\% constraint satisfaction} across all ten environments \emph{without access to DB schemas}, while statistical synthesizers are inapplicable to seven due to necessary seed data requirements, and schema-privileged agents fail 82\% of trajectories on airline environment's tightly coupled workflows due to brittle task composition. We open-source the full framework, all ten environments, and generated datasets at \url{https://github.com/SAP/synthesis-through-simulation}.

\end{abstract}


\section{Introduction}
\label{sec:intro}

Tool-calling agents have become central to enterprise AI, enabling large language models (LLMs) not only to generate text but also take actions against existing 
systems~\cite{Yao2023react,Qin2024toolllm,Yao2024taubench}. Training and evaluating such agents at scale requires corpora of realistic, coherent
enterprise data in the form of pre-populated database snapshots that agents can act on without encountering invalid states. Yet assembling this data from real systems is severely constrained by privacy and compliance requirements; production enterprise databases often contain personal and sensitive data that cannot be extracted, shared, or used directly for agent training~\cite{Dwork2006,Abadi2016,Godbole2025}. Database schemas are often equally restricted, encoding proprietary domain models and system architectures that organizations are unwilling to expose. Synthetically generated data offers a natural alternative, but what does ``realistic'' mean for enterprise data, and how effective are these existing approaches for synthesis?

In statistical data synthesis, a record is treated as realistic if its feature values match the empirical distribution of a training corpus~\cite{Patki2016sdv,Stoian2025survey}. Enterprise validity is fundamentally different: it is not an intrinsic property of individual records but a \emph{relational} one, defined by the policies governing them. For example, an employee record must reference a valid department and pay-grade code, and a time-off request must not exceed the employee's accrued balance. These constraints encode business logic and are enforced at the API layer of production enterprise systems --- not as annotations on individual rows, but as preconditions checked against live database states at write time. 

Statistical tabular data generators such as CTGAN~\cite{Xu2019}, TabDDPM~\cite{Kotelnikov2023tabddpm}, and GReaT~\cite{Borisov2023great} optimise distributional fidelity but treat rows independently, have no mechanism for enforcing relational, policy-defined constraints, and require large pre-populated training corpora, creating a cold-start problem in enterprise deployments where production data cannot be exported due to privacy regulations~\cite{Dwork2006,Abadi2016}. In contrast, procedure-based synthesis encodes business rules directly into the generation process~\cite{Ge2021kamino,Li2023irg}, addressing structural validity but not \emph{distributional fidelity}: it specifies what is allowed, not what is typical. Additionally, procedure-based approaches typically request substantial \emph{per-environment authoring} in the form of constraint encodings, workflow scripts, and sampling rules that must be hand-crafted for each new enterprise domain, imposing a human effort cost that scales poorly as the number of environments grows.

We introduce \textbf{Synthesis Through Simulation} (STS), a data synthesis paradigm in which an LLM agent generates data by executing operations against policy-enforcing APIs within simulated enterprise environments. The core design principle is simple: because data is generated \emph{through} the same enforcement layer that defines what is valid, structural validity is guaranteed by construction, as any trajectory accepted by the API produces a coherent database snapshot. This also \emph{decouples} the two requirements; as validity enforcement is delegated entirely to the API, the agent is free to focus on distributional realism.

With structural validity guaranteed by the environment, we introduce the \textbf{Generalist Populator} (GP), STS's domain-agnostic agent that addresses the remaining challenges of distributional fidelity and scalability without requiring per-environment configuration. Given only the raw tool registry (the function signatures and descriptions exposed at the
API boundary), GP achieves validation pass rate, \(\mathrm{VPR} = 1\) across all ten environments and marginal fidelity above 0.93 in five, \textit{without} any access to the underlying database schema, hand-authored intent library, sampler, or per-environment configuration.

To summarize, our contributions include: \textbf{(i)} a formal problem decomposition of enterprise data synthesis into three independent axes --- structural validity, distributional fidelity, and scalability --- showing that API-layer generation guarantees structural validity by construction; \textbf{(ii)} \textbf{Synthesis Through Simulation (STS)}, a paradigm that operationalizes this decomposition by decoupling validity enforcement from distribution modeling; and \textbf{(iii)} the \textbf{Generalist Populator (GP)}, a zero-authoring agent requiring only the raw tool registry.


\section{Related Work}
\label{sec:related}

\textbf{Tabular data synthesis.}
Statistical approaches to tabular synthesis span GAN-based, diffusion-based, and LLM-based generators~\cite{Stoian2025survey}. CTGAN~\cite{Xu2019} uses conditional GANs with mode-specific normalisation; TabDDPM~\cite{Kotelnikov2023tabddpm} applies diffusion models to mixed tabular schemas; GReaT~\cite{Borisov2023great} fine-tunes a language model to generate rows as natural language. These approaches reproduce empirical distributions over a training corpus but treat each record independently, with no access to the enforcement layer that production systems rely on, and suffer from cold start problem if no seed data is available. The Synthetic Data Vault (SDV)~\cite{Patki2016sdv,Zhang2022sdv} addresses referential integrity between tables by conditioning child-table synthesis on parent foreign keys, yet cannot enforce procedural constraints that must be checked against live database state at write time. While Kamino~\cite{Ge2021kamino} and IRG~\cite{Li2023irg} handle static declarative constraints (functional dependencies, denial constraints, etc.) within constraint-aware synthesis frameworks, static constraint encodings remain fundamentally distinct from dynamic, API-enforced business logic that depends on current database state. In contrast, STS generates data \emph{through} the enforcement layer itself, requiring minimal reference data, making per-constraint encoding unnecessary, and guaranteeing structural validity by construction.

\textbf{Agentic data and environment synthesis.}
A growing body of work uses LLM agents to synthesize training data and evaluation environments. APIGen-MT~\cite{Prabhakar2025apigenmt} and DiaFORGE~\cite{hathidara2026disambiguationcentricfinetuningmakesenterprise} simulate agent-user interaction to generate multi-turn conversation trajectories for agent fine-tuning; ASTRA~\cite{Tian2026astra} synthesizes agentic trajectories and RL arenas via tool-call graph expansion. EnvScaler~\cite{Song2026envscaler} and ScaleEnv~\cite{Tu2026scaleenv} scale tool-interactive environments programmatically from seed specifications. These systems target agent training trajectories rather than enterprise database snapshots, and none enforce API-level business rules at write time, where structural validity is neither a design goal nor a measurement target.

\textbf{Tool-calling agent benchmarks.}
Tool-calling agents have been assessed through a range of evaluation benchmarks: AgentBench~\cite{Liu2023agentbench} and WebArena~\cite{Zhou2023webarena} span web and code environments; SWE-bench~\cite{Jimenez2024swebench} targets software engineering tasks; $\tau$-bench~\cite{Yao2024taubench} evaluates agents on policy-constrained retail and airline tasks using an LLM judge that verifies final states against natural-language policies; TED~\cite{chong2026talk} extends agent evaluation with user-aware interaction and automated error analysis. ReAct~\cite{Yao2023react}, Toolformer~\cite{Schick2023toolformer}, and ToolLLM~\cite{Qin2024toolllm} study agent capability for tool use at scale. These benchmarks characterise what agents can do but do not attempt to produce enterprise data as output. The most related is $\tau$-bench, which shares our enterprise-domain focus and policy-constrained setting; STS diverges by enforcing constraints programmatically at the API layer rather than via a post-hoc LLM judge, and by treating the populated database snapshot, not the agent's pass/fail score, as the primary deliverable. Table~\ref{tab:related-comparison} summarizes how these systems compare across the axes introduced in this paper.

\begin{table}[t]
  \centering
  \caption{Comparison of related systems. \checkmark~= satisfied; $\times$~= not; $\sim$~= partial.}
  \label{tab:related-comparison}
  \footnotesize
  \begin{tabular}{lccccccc}
    \toprule
    System & Output & Struct.\ validity & Dist.\ fidelity & Ref.\ dist.\ aware & Zero authoring & Schema-free \\
    \midrule
    CTGAN~\cite{Xu2019}                        & DB records   & $\times$   & \checkmark & \checkmark & $\times$ & $\times$ \\
    IRG~\cite{Li2023irg}                       & DB records   & $\sim$     & $\sim$     & $\times$   & $\times$ & $\times$ \\
    EnvScaler~\cite{Song2026envscaler}         & Envs + traj. & $\sim$     & \checkmark & $\times$   & $\times$ & $\times$ \\
    ScaleEnv~\cite{Tu2026scaleenv}             & Envs + traj. & $\sim$     & $\times$   & $\times$   & $\times$ & $\times$ \\
    $\tau$-bench~\cite{Yao2024taubench}        & Evaluation   & $\sim$     & $\times$   & $\times$   & $\times$ & \checkmark \\
    \midrule
    \textbf{STS (ours)}                        & \textbf{DB + traj.} & \checkmark & \checkmark & \checkmark & \checkmark & \checkmark \\
    \bottomrule
  \end{tabular}
\end{table}

\section{Problem Formulation}
\label{sec:problem}

\subsection{Enterprise Environments}

An \emph{enterprise environment} $\mathcal{E} = (\mathcal{D}, \mathcal{T}, \Psi, \mathcal{V})$ consists of: a database state space $\mathcal{D}$; a typed \emph{tool registry} $\mathcal{T}$, where each tool $t \in \mathcal{T}$ carries an input schema $\sigma_t$ and a precondition $\phi_t : \mathcal{D} \times \mathrm{Args} \to \{0,1\}$ encoding business rules that must hold at write time; a transition function $\Psi$, where each tool call $a = (t, x)$ is mapped as
\begin{equation}
  \Psi(a, d) =
  \begin{cases}
    d' & \text{if } \phi_t(d, x) = 1 \\
    d  & \text{otherwise.}
  \end{cases}
  \label{eq:psi}
\end{equation}
where $d' \triangleq \mathrm{apply}(t, x, d)$ is the deterministic state obtained by committing the write of tool $t$ with arguments $x$ to state $d$; and a validation suite $\mathcal{V} = \{r_1, \ldots, r_M\}$ of $M$ programmatic business-rule checks. 
The tool registry is the \emph{only interface} to the environment: the underlying database schema is not accessible, reflecting the enterprise reality that APIs are the sanctioned external boundary while schemas are subject to data-governance controls. A \emph{synthesis trajectory} $\tau = (a_1, \ldots, a_n)$ is a sequence of tool calls; starting from an initial state $d_0 = d_{\text{seed}}$ (where $d_{\text{seed}}$ may be empty or contain pre-populated seed records), the induced state sequence satisfies $d_i = \Psi(a_i, d_{i-1})$.

\subsection{Synthesis Objectives}

The \emph{structural validity} of a synthesizer is its Validation Pass Rate (VPR) over the terminal state $d_n$:
\begin{equation}
  \mathrm{VPR} = \frac{1}{M}\sum_{k=1}^{M}
    \mathbf{1}\bigl[r_k(d_n) = \textsc{pass}\bigr].
  \label{eq:vpr}
\end{equation}

\begin{proposition}[Structural Validity by Construction]
\label{prop:vpr}
For any trajectory $\tau$ executed through $\mathcal{E}$, $\mathrm{VPR} = 1$.
\end{proposition}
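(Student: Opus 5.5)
The plan is an induction on the length of the trajectory. The invariant is that the reachable state is valid: every state $d_i$ induced by $\tau$ satisfies every check $r_k \in \mathcal{V}$. The argument needs a link between the validation suite $\mathcal{V}$ and the tool preconditions $\phi_t$, and the formulation in Section~\ref{sec:problem} leaves this link implicit. I will therefore state it as two explicit hypotheses on $\mathcal{E}$. These are the modelling assumptions under which Proposition~\ref{prop:vpr} is meant.
\begin{enumerate}
\item \emph{Base validity:} $r_k(d_{\text{seed}}) = \textsc{pass}$ for all $k$. This holds vacuously when the seed is empty.
\item \emph{Preservation by guarded writes:} for every tool $t$, argument $x$, and valid state $d$, if $\phi_t(d,x)=1$ then $\mathrm{apply}(t,x,d)$ is again valid.
\end{enumerate}
The second hypothesis says precisely that each precondition $\phi_t$ encodes the business rules checked by $\mathcal{V}$ that are relevant to the write performed by $t$.

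With these in hand, the steps are routine. The base case $d_0 = d_{\text{seed}}$ is valid by the first hypothesis. For the inductive step, suppose $d_{i-1}$ is valid and split on the case structure of $\Psi$ in \eqref{eq:psi}.
\begin{itemize}
\item If $\phi_{t_i}(d_{i-1},x_i)=1$, then $d_i = \mathrm{apply}(t_i,x_i,d_{i-1})$, which is valid by the second hypothesis.
\item Otherwise $d_i = d_{i-1}$, so validity is inherited unchanged. A rejected call leaves the state untouched.
\end{itemize}
Hence $d_n$ is valid, so each indicator in \eqref{eq:vpr} equals $1$ and $\mathrm{VPR} = \tfrac{1}{M}\cdot M = 1$. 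Nothing in the argument depends on which agent produced $\tau$. The result therefore holds for \emph{any} trajectory, including adversarial or erroneous ones.

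The main obstacle is justifying the second hypothesis rather than carrying out the induction. Each $r_k$ is a global predicate on the whole database, whereas $\phi_t$ is checked locally at write time. The hypothesis can fail in two ways. First, a rule may be violated by a combination of individually admissible writes. Second, a rule may be affected by a tool whose precondition does not guard it; for example, a delete may orphan a foreign-key reference.

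To rule these out, I would first try to show that every $r_k$ lies in the closure of the preconditions. Concretely, for each tool $t$ and each rule $r_k$ whose truth can change under $\mathrm{apply}(t,\cdot,\cdot)$, I would check that $\phi_t$ contains the corresponding guard. If the environments are not built that way, I would instead define validity as the inductive invariant itself. That is, I would take $\mathcal{V}$ to be the conjunction of the checks that the APIs enforce, and state the proposition conditionally on this design property of the environment.
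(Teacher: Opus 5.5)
Your proposal is correct and follows the paper's proof essentially verbatim. It is an induction on trajectory length with a valid seed as the base case, and its inductive step splits on whether $\phi_{t_i}(d_{i-1},x_i)=1$ (a committed write preserves every check in $\mathcal{V}$) or not (the state is unchanged). Your two explicit hypotheses are exactly what the paper invokes as ``valid by assumption'' and ``by definition preserves every invariant in $\mathcal{V}$,'' so stating them openly is a faithful and more transparent rendering of the same argument, in line with the conditionality the paper itself concedes in its Limitations paragraph.
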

\begin{proof}
By induction. \emph{Base case:} $d_{\mathrm{seed}}$ is valid by assumption ($\varnothing$ satisfies all checks trivially). \emph{Inductive step:} a write is committed only when $\phi_{t_i}(d_{i-1},x_i)=1$, which by definition preserves every invariant in $\mathcal{V}$; rejected calls leave $d_{i-1}$ unchanged.
\end{proof}

Because VPR$\,{=}\,1$ is guaranteed by the environment, the synthesis challenge reduces to \emph{distributional fidelity and scalability}: generating value distributions that match real-world expectations, without per-environment intent authoring. We quantify this as \emph{distributional fidelity} (DF) across three levels of granularity --- marginal per-column (MDF), pairwise joint within a table (PDF), and cross-table relational child-count distributions (CDF) --- with formal metric definitions in Section~\ref{sec:experiments}. The synthesis task is to maximize DF, with VPR$\,{=}\,1$ guaranteed by construction. \textbf{This decoupling is the central design principle of STS}: validity is delegated entirely to the API enforcement layer, freeing the synthesizer to focus on distributional realism without constraint-encoding overhead.



\section{The STS Framework}
\label{sec:framework}

STS comprises two components: a suite of policy-enforcing environments that guarantee structural validity by construction, and the Generalist Populator (GP), a zero-authoring agent that synthesizes distribution-faithful data from the tool registry alone.

\begin{figure}[t]
  \centering
  \includegraphics[width=0.90\linewidth, trim = 0cm 0.5cm 0cm 0.5cm, clip]{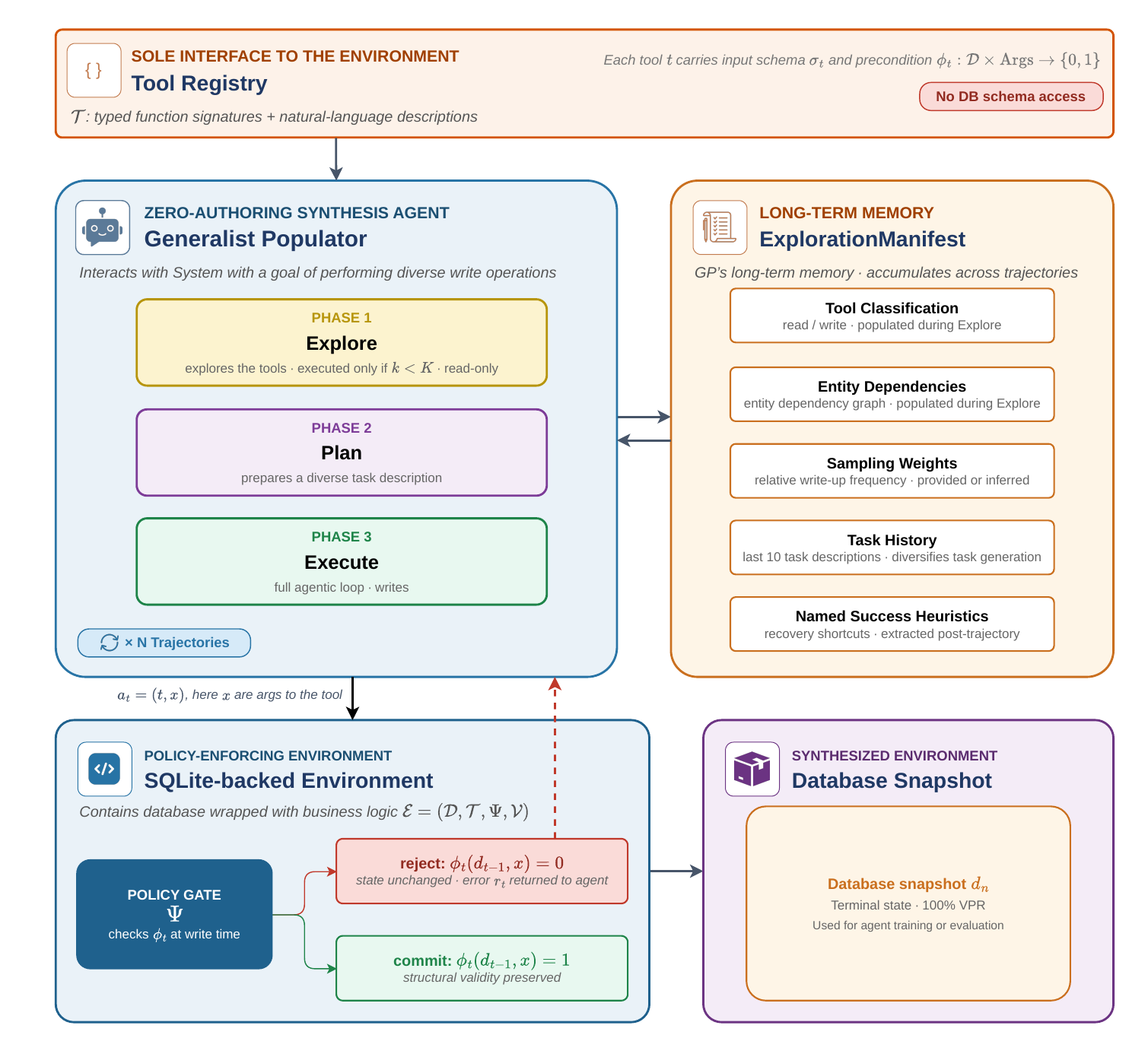}
  \caption{Overview of the STS framework. GP operates as an outer loop of $N$ trajectories,
           each passing through three phases (\textsc{Explore}, \textsc{Plan},
           \textsc{Execute}) against a policy-enforcing environment. The
           \texttt{ExplorationManifest} accumulates structural knowledge, distribution
           weights, and reusable heuristics shared across all trajectories.}
  \label{fig:pipeline}
\end{figure}

\subsection{Policy-Enforcing Environments}

The STS framework operates across ten enterprise environments spanning human resources, airline reservations, IT service management, banking, e-commerce, retail, university enrollment, pharmacy dispensing, online forums, and cloud computing infrastructure. Each environment consists of a SQLite-backed mock enterprise system that exposes the typed tool registry $\mathcal{T}$ and validation suite $\mathcal{V}$ formalized in Section~\ref{sec:problem}. The tool registry is the sole interface: a set of typed function signatures --- name, natural-language description, and input schema --- that encapsulate all state-mutating operations. No database schema is exposed by design; in many enterprise deployments, schema access requires the same review as production data access. Table~\ref{tab:environments} in Appendix \ref{app:environments} summarizes the environments by domain, table count, tool count, and number of programmatic validation checks.

\subsection{The Generalist Populator}

The \textbf{Generalist Populator} (GP) is a zero-authoring, domain-agnostic synthesis agent requiring only the raw tool registry, with no access to the underlying database schema, no hand-authored task specifications, and no per-environment configuration. Optional reference distributions may be supplied, but are not used in our evaluation for fair comparison. Rather than generating records directly, GP populates the database exclusively by issuing API calls through the tool registry, so every write is mediated by the environment's enforcement layer, and structural validity is guaranteed by construction. GP operates as an outer loop of $N$ trajectories, where each trajectory executes one concrete task by interacting with the environment through its API. Running GP $N$ times thus produces $N$ tasks worth of database population. Across all trajectories, GP maintains a persistent \texttt{ExplorationManifest} --- a lightweight SQLite store that serves as long-term memory, accumulating structural knowledge, distribution weights, task history, and reusable heuristics that all future trajectories can draw on. Each trajectory passes through three phases (\textsc{Explore}, \textsc{Plan}, and \textsc {Execute}), with information from the manifest provided as context at every
stage.

\textbf{\textsc{Explore} phase.}
The goal of \textsc{Explore} is to build a mental model of the environment sufficient for goal-directed synthesis: which tools exist, whether the tool is a read or write operation, what entities they create, in what order entities must be created, and how frequently each operation should appear in a realistic workload, including inferring plausible write-operation sampling weights from world knowledge when no reference distributions are provided. Because this model is stable once acquired, \textsc{Explore} runs only for the first $K$ trajectories and is skipped thereafter, with \textsc{Plan} and \textsc{Execute} reusing the manifest directly. 

During exploration, all write tools are blocked with a structured rejection message so the agent builds its model purely from read calls and tool signature inspection, without modifying the database. This write guard is deliberate: premature writes made under an incomplete model would introduce poorly-targeted records that degrade distributional fidelity before the agent has sufficient context.  

The agent records its findings into the manifest via dedicated manifest tools: it classifies every tool as read-only or state-mutating, identifies entity types and the tools that create them, infers entity dependency ordering from argument schemas, and estimates relative write-operation sampling weights (when not already provided) that reflect how frequently each operation should appear.

\begin{figure}[t]
  \centering
  \begin{subfigure}{0.32\linewidth}
    \includegraphics[width=\linewidth]{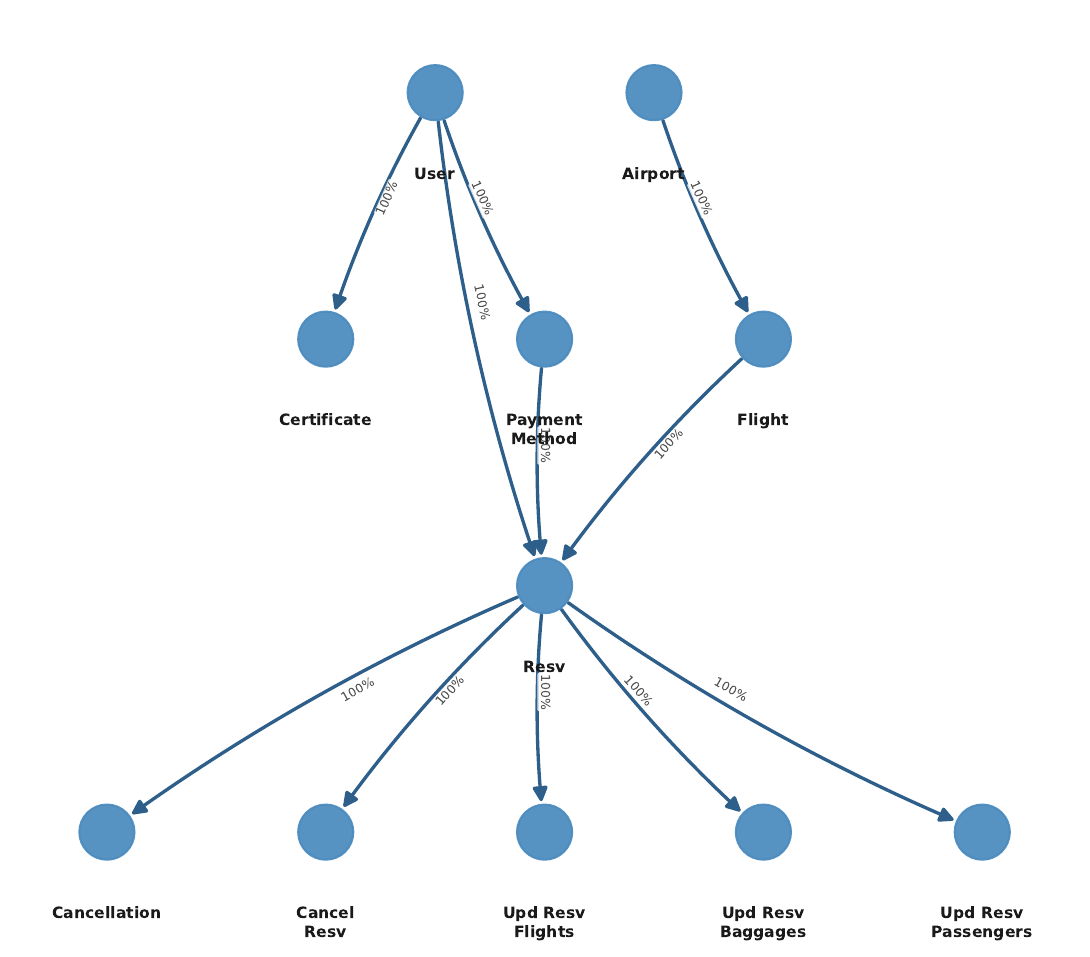}
    \caption{Airline (16 tools, 11 entities)}
    \label{fig:manifest-airline}
  \end{subfigure}
  \hfill
  \begin{subfigure}{0.32\linewidth}
    \includegraphics[width=\linewidth]{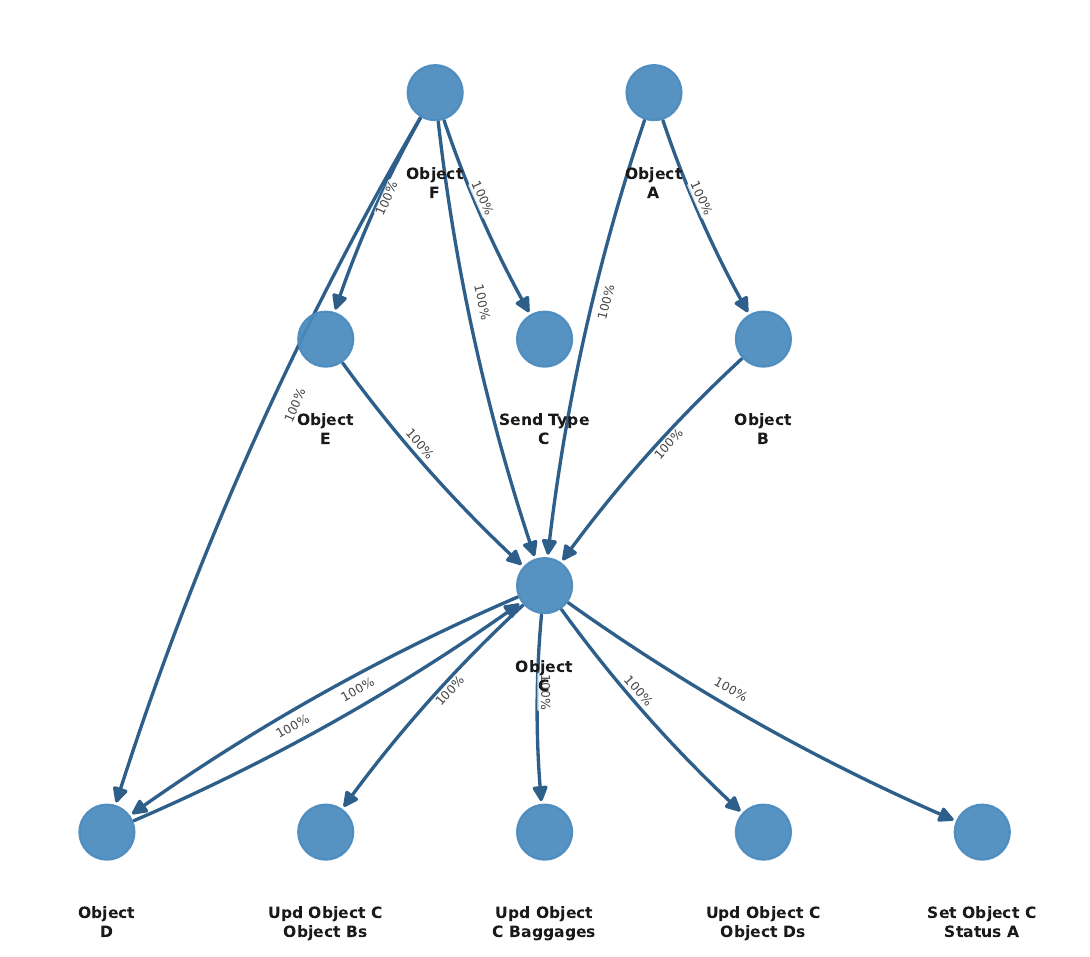}
    \caption{Obfuscated Airline}
    \label{fig:manifest-obfuscated-airline}
  \end{subfigure}
  \hfill
  \begin{subfigure}{0.32\linewidth}
    \includegraphics[width=\linewidth]{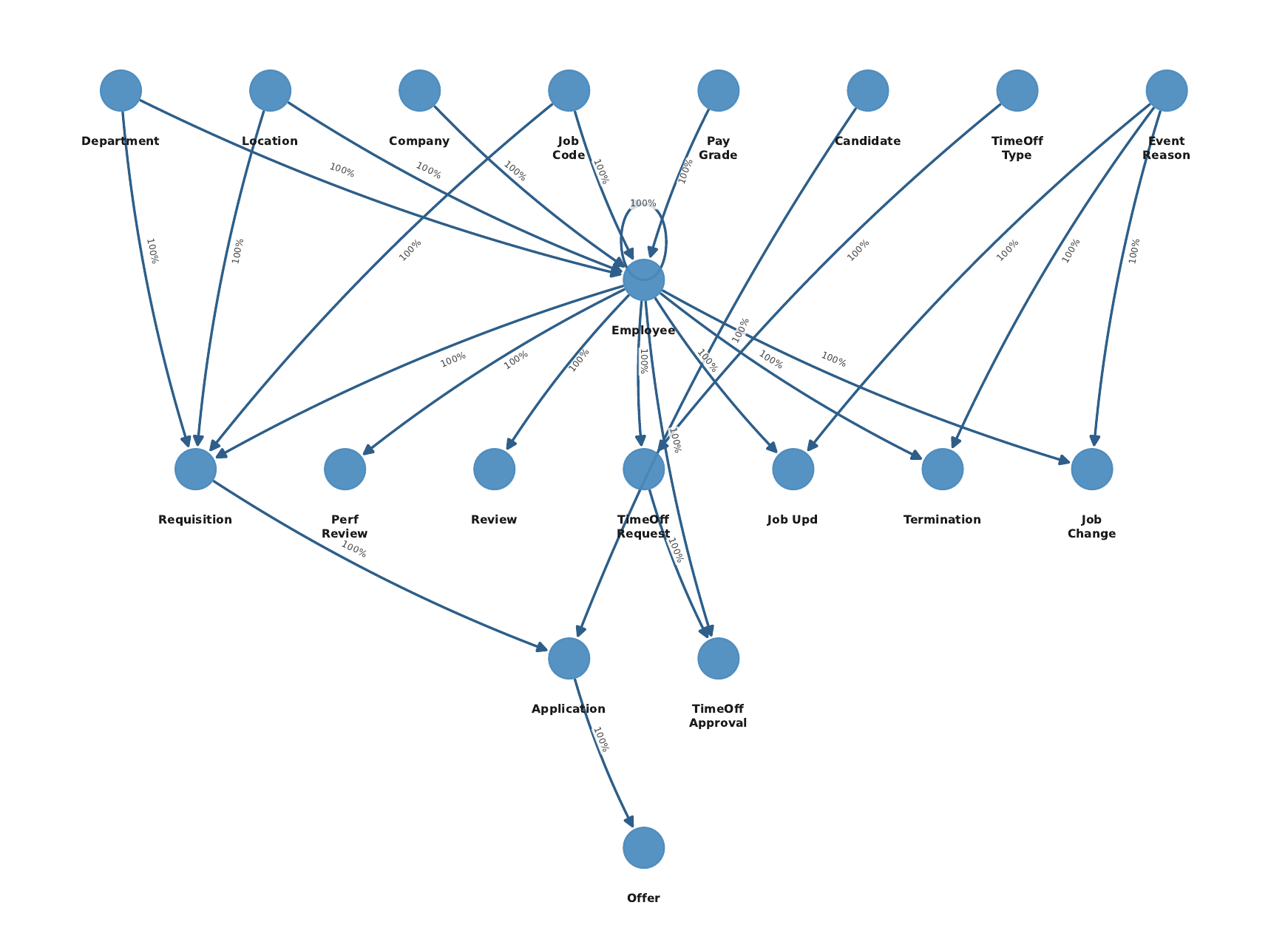}
    \caption{HR (30 tools, 19 entities)}
    \label{fig:manifest-hr}
  \end{subfigure}
  \caption{\textbf{Entity dependency graphs dynamically inferred by GP during the \textsc{Explore} phase}
           for three environments.
           Nodes are entity types; directed edges represent creation dependencies
           inferred from tool argument schemas.}
  \label{fig:manifests}
\end{figure}

\textbf{\textsc{Plan} phase.}
Given the manifest, Plan issues a single structured LLM call to produce a concrete \emph{task description} for the current trajectory. The call receives the full manifest context (entity dependency graph, tool classifications, operation weights) together with two additional signals that promote distributional coverage across trajectories: recent task history (so the planner avoids repeating prior tasks) and a coverage check that flags unused write operations and requires the planner to prioritize them.

To guide attribute-level distributional fidelity, Plan employs a two-tier mechanism: when per-environment reference distributions are provided (e.g., target proportions for categorical fields), the planner anchors its attribute targets to these priors; when no reference is available, it draws on the world knowledge of the underlying language model to infer plausible value distributions. The output is a \texttt{task\_description}: a concrete, multi-step specification with target attribute values (e.g., ``book a round-trip business class flight for a gold-tier user'') but intentionally generic entity references, since specific identifiers such as user IDs or reservation numbers must be discovered from the live database at execution time.

\textbf{\textsc{Execute} phase.}
Execute instantiates the task description through a full agentic loop with unrestricted access to the complete tool registry. The agent's system prompt is assembled from three sources: manifest context, the task description from \textsc{Plan}, and accumulated heuristics. The agent queries the live database using certain read operations for the entity identifiers it needs, then calls write tools in dependency order, handling rejected calls (those for which $\phi_t(d, x) = 0$) by adapting from the structured error response. After the trajectory completes, GP performs a best-effort heuristic extraction step: an LLM reviews the full tool-call sequence and identifies whether the agent discovered a shortcut late in the trajectory, a call that, once made, unlocked a sequence of subsequent successes after many failed attempts (e.g., listing existing reservations to retrieve valid flight routes rather than guessing route arguments exhaustively). If such a pattern is found, it is persisted in the manifest as a named heuristic, so all future trajectories can benefit from the shortcut from the outset rather than rediscovering it through trial and error. 


\section{Experiments}
\label{sec:experiments}

We evaluate GP across all ten enterprise environments described in Section~\ref{sec:framework} (with full implementation details available in Appendix~\ref{app:impl} ). We compare it against four tabular data generation methods: CTGAN~\cite{Xu2019}, TVAE~\cite{Xu2019}, SDV HMA~\cite{Patki2016sdv,Zhang2022sdv}, and REaLTabFormer~\cite{Solatorio2023} (see configurations in Appendix~\ref{app:stat-config}). We additionally compare it to \emph{EnvScaler}~\cite{Song2026envscaler}, a schema-privileged agent that receives the full database schema, live state, and business rules, none of which are provided to (or required by) GP. In all experiments, the GP \textsc{Explore} phase is given no human-constructed write operation distributions, enabling it to infer priors from scratch, ensuring a fair comparison with baselines and preventing any leakage of human bias.

\subsection{Metrics}
\label{sec:metrics}

All three distributional metrics share a common JSD-based similarity foundation: $\mathrm{sim}(P,Q)=1-\mathrm{JSD}(P\|Q)/\ln 2 \in [0,1]$~\cite{Lin1991jsd}, where 1 means identical distributions and 0 means maximally divergent.
Each metric operates over a fixed set of \emph{registered} fields --- columns, column pairs, or FK relationships whose reference distributions are grounded in publicly available data (industry reports, government statistics), ensuring priors are fixed independently of any synthesizer (Appendix~\ref{app:priors}).
\emph{Marginal Distribution Fidelity} (MDF) averages per-column $\mathrm{sim}$ over all registered (table, column) pairs (Table~\ref{tab:mdf-priors}), comparing the agent's generated distribution against a reference marginal; numeric columns substitute the complement of the Kolmogorov--Smirnov statistic~\cite{Kolmogorov1933,Smirnov1948}.
\emph{Pairwise Distribution Fidelity} (PDF) applies the same similarity to \emph{joint} within-table distributions for registered column pairs (Table~\ref{tab:pdf-priors}), capturing correlations invisible to per-column statistics.
\emph{Cross-table Distribution Fidelity} (CDF) measures whether the distribution of child-record counts per agent-created parent matches reference expectations across each registered FK relationship (Table~\ref{tab:cdf-priors}).
\emph{Average Successful Write Operations} (WO) is the mean number of state-changing API calls successfully committed per trajectory.
Structural validity is reported as \emph{Validation Pass Rate} (VPR, Equation~\ref{eq:vpr}); by Proposition~\ref{prop:vpr}, VPR\,=\,1 is guaranteed for any STS trajectory.
Full formal derivations appear in Appendix~\ref{app:metrics}.
All distributional metrics report 95\% bootstrap CIs ($B\!=\!1000$).

\subsection{GP Performance Across All Environments}
\label{sec:df-results}

\begin{table}[t]
  \centering
  \caption{GP ($K\!=\!3$) distributional fidelity across ten environments and one obfuscated variant
           ($N\!=\!100$ trajectories each); 95\% bootstrap CIs.
           \emph{Records}: total new database rows generated.
           ---: no column pairs (PDF) or FK relationships (CDF) registered for that environment.}
  \label{tab:df-main}
  \footnotesize
  \begin{tabular}{lccccc}
    \toprule
    Environment & MDF & PDF & CDF & WO\,(mean\,$\pm$\,std) & Records \\
    \midrule
    Airline       & 0.973 [0.935, 1.000] & 0.900 [0.898, 0.902] & 0.949 [0.923, 0.975] & 1.07\,$\pm$\,0.74 & 389 \\
    Obfus. Airline & 0.977 [0.954, 0.995] & 0.933 [0.928, 0.937] & 0.985 [0.971, 0.998] & 1.37\,$\pm$\,1.43 & 518 \\
    IT Mgmt       & 0.966 [0.963, 0.969] & 0.902 [0.789, 0.938] & --- & 1.29\,$\pm$\,0.80 & 54 \\
    Forum         & 0.945 [0.945, 0.945] & --- & 0.318 [0.142, 0.401] & 2.27\,$\pm$\,2.27 & 257 \\
    HR            & 0.939 [0.939, 0.939] & --- & 0.505 [0.505, 0.505] & 3.32\,$\pm$\,2.60 & 756 \\
    Banking       & 0.864 [0.814, 0.913] & 0.859 [0.706, 0.940] & 0.421 [0.236, 0.516] & 4.59\,$\pm$\,2.83 & 486 \\
    Cloud         & 0.850 [0.707, 0.943] & 0.825 [0.765, 0.847] & 0.610 [0.311, 0.757] & 1.85\,$\pm$\,1.39 & 102 \\
    University    & 0.809 [0.771, 0.824] & --- & --- & 0.75\,$\pm$\,0.71 & 59 \\
    Pharmacy      & 0.804 [0.793, 0.804] & --- & --- & 1.80\,$\pm$\,2.34 & 90 \\
    Retail        & 0.709 [0.420, 0.997] & 0.993 [0.764, 0.997] & 0.643 [0.461, 0.826] & 5.58\,$\pm$\,2.96 & 490 \\
    E-commerce    & 0.983 [0.914, 0.999] & ---   & 0.578 [0.442, 0.665] & 2.56\,$\pm$\,1.27 & 189 \\
    \bottomrule
  \end{tabular}
\end{table}

GP achieves $\mathrm{VPR}\!=\!1.0$ across all ten environments, confirming Proposition~\ref{prop:vpr}.
Table~\ref{tab:df-main} reports distributional fidelity and write productivity. Environments Airline (MDF 0.973, PDF 0.900, CDF 0.949), IT Mgmt (MDF 0.966, PDF 0.902), Forum (MDF 0.945), HR (MDF 0.939), and E-commerce (MDF 0.983) achieve MDF above 0.93 across diverse domains.
HR and Banking have the highest mean WO (3.32 and 4.59), reflecting multi-step workflows that benefit from the exploration manifest.

Several environments show more moderate MDF: Banking (0.864), Cloud (0.850), University (0.809), and Pharmacy (0.804).
Two environments additionally show low CDF: Forum (0.318) and Banking (0.421), due to a \emph{task-ratio imbalance}.  GP's planner draws on LLM world knowledge to estimate task-type probabilities, which assigns roughly equal weight to entity-creation and follow-on operation intents without modeling population-level operational skew.
In Forum, thread-creation tasks occur at the same frequency as post-reply tasks, yielding mostly 1--2 posts per thread while the reference expects a median of 3 (tail to 10); in Banking, account-creation tasks are sampled at a similar rate to deposit and withdrawal tasks, while the reference expects 5--30 transactions accumulated per account over time.
Both gaps reflect a world-knowledge miscalibration for task-type frequencies in the absence of domain-specific frequency signals, not an inability to generate relational structure; automatic inference of realistic operation priors from seed database statistics or domain-context prompting is a natural future direction.

To test semantic independence, we evaluate the same configuration on an \emph{obfuscated airline} variant where all identifiers are replaced with opaque codes (e.g., \texttt{book\_object\_c}, \texttt{type\_b} for business class). GP matches the semantic baseline on all three metrics.
Figure~\ref{fig:manifests} shows the entity dependency graphs inferred by GP during \textsc{Explore} for both airline variants and HR; the semantic and obfuscated airline graphs (Figures~\ref{fig:manifest-airline} and~\ref{fig:manifest-obfuscated-airline}) share 11 nodes and differ by only 2 edges ($\mathrm{GED}_{\mathrm{norm}}\!=\!0.154$; Appendix~\ref{app:graph-similarity}), suggesting that GP can recover entity creation ordering from argument schema structure alone, without relying on domain-interpretable names. These results provide evidence that STS can operate in settings where semantic context is limited, a common phenomenon in enterprise deployments.

\subsection{Generalization to Real Enterprise Systems}
\label{sec:real-world}

Apart from the environments evaluated above, a central question is whether GP's zero-authoring design transfers to real enterprise systems. We evaluate GP on two real-world systems with no per-environment configuration and no schema access: a proprietary multi-module \textbf{Payroll} system (42 tools, 13 tables) and \textbf{MintHCM}~\cite{MintHCM}, an open-source Human Capital Management system (54 tools, 23 tables). Table~\ref{tab:real-world} reports the results.

\begin{table}[t]
  \centering
  \caption{Real-world generalization: GP ($K\!=\!3$, $N\!=\!100$) on two real enterprise systems.}
  \label{tab:real-world}
  \footnotesize
  \begin{tabular}{lccccc}
    \toprule
    System & MDF & PDF & CDF & WO\,(mean\,$\pm$\,std) & Records \\
    \midrule
    Payroll & 0.927 [0.870, 0.974] & 0.748 [0.603, 0.803] & 0.841 [0.796, 0.903] & 11.09\,$\pm$\,9.74 & 1012 \\
    MintHCM & 0.902 [0.827, 0.955] & 0.344 [0.249, 0.438] & 0.634 [0.542, 0.717] & 11.97\,$\pm$\,10.50 & 478 \\
    \bottomrule
  \end{tabular}
\end{table}

On both systems GP achieves $\mathrm{VPR}\!=\!1.0$ (by design) and marginal fidelity $\mathrm{MDF}\!\ge\!0.90$ without any schema access. MintHCM's lower PDF (0.344) is a data-volume effect, the joint distribution of 17 funnel stages $\times$ 7 sourcing channels in the candidatures table (119 possible pairs) requires more trajectories to cover adequately whereas N=100. These results provide evidence that STS can generalize to real-world systems.

\subsection{LLM Backbone Sensitivity}
\label{sec:backbone}

We run GP ($K\!=\!3$, $N\!=\!100$) with three distinct backbones to evaluate its sensitivity to the choice of LLM: \textbf{Claude Opus 4.5}~\citep{anthropic2025claudeopus45}, \textbf{Gemini 2.5 Pro}~\citep{comanici2025gemini25}, and \textbf{Qwen3.6-35B-A3B-FP8}~\citep{qwen36_35b_a3b}. We perform experiments across three environments (Airline, HR, Cloud). Table~\ref{tab:backbone} reports the results.

\begin{wraptable}{r}{0.65\linewidth}
  \vspace{-10pt}
  \centering
  \setlength{\tabcolsep}{2pt}
  \caption{Backbone sensitivity: GP ($K\!=\!3$, $N\!=\!100$) with three LLM backbones across three environments. CIs in Appendix~\ref{app:backbone}.}
  \label{tab:backbone}
  \footnotesize
  \begin{tabular}{llcccc}
    \toprule
    Environment & Backbone & MDF & PDF & CDF & WO\,(mean\,$\pm$\,std) \\
    \midrule
    \multirow{3}{*}{Airline}
      & Claude Opus 4.5     & 0.973 & 0.900 & 0.949 & 1.07\,$\pm$\,0.74 \\
      & Gemini 2.5 Pro      & 0.937 & 0.685 & 0.957 & 0.68\,$\pm$\,0.83 \\
      & Qwen3.6-35B-A3B-FP8 & 0.904 & 0.788 & 0.980 & 2.05\,$\pm$\,1.84 \\
    \midrule
    \multirow{3}{*}{HR}
      & Claude Opus 4.5     & 0.939 & --- & 0.505 & 3.32\,$\pm$\,2.60 \\
      & Gemini 2.5 Pro      & 0.940 & --- & 0.514 & 2.72\,$\pm$\,3.25 \\
      & Qwen3.6-35B-A3B-FP8 & 0.937 & --- & 0.509 & 8.91\,$\pm$\,11.65 \\
    \midrule
    \multirow{3}{*}{Cloud}
      & Claude Opus 4.5     & 0.850 & 0.825 & 0.610 & 1.85\,$\pm$\,1.39 \\
      & Gemini 2.5 Pro      & 0.831 & 0.838 & 0.793 & 10.60\,$\pm$\,48.29 \\
      & Qwen3.6-35B-A3B-FP8 & 0.939 & 0.837 & 0.504 & 2.55\,$\pm$\,4.71 \\
    \bottomrule
  \end{tabular}
  \vspace{-6pt}
\end{wraptable}

All three backbones maintain $\mathrm{VPR}\!=\!1.0$ with meaningful fidelity: no backbone causes GP to degenerate. Fidelity varies across backbones (e.g., Airline PDF 0.685--0.900, Cloud CDF 0.504--0.793), but the variation is comparable in magnitude to the cross-environment spread already present in Table~\ref{tab:df-main} and reflects differences in model pretraining rather than a structural fragility of STS. The Gemini Cloud WO mean (10.60\,$\pm$\,48.29) is inflated by a few long exploration trajectories, consistent with high-variance exploration rather than instability.

\subsection{Coverage Advantage: GP vs.\ Tabular Data Synthesizers}
\label{sec:statistical-baselines}

Statistical synthesizers suffer from cold starts, as seven of ten environments have no pre-populated seed data. These are not edge cases but fresh enterprise deployment-like scenarios where the synthesis
task is to \emph{create} the initial database state from scratch. Figure~\ref{fig:baselines} compares GP against CTGAN~\cite{Xu2019}, TVAE~\cite{Xu2019},
HMA~\cite{Patki2016sdv,Zhang2022sdv}, and REaLTabFormer~\cite{Solatorio2023} on
the three environments where seed data exists. We observe that the primary differentiator is structural validity: GP achieves $\mathrm{VPR}\!=\!1.0$ by construction, while offline methods reach at most 0.923 --- a ceiling that no amount of additional training data can overcome, as these methods bypass the API enforcement layer entirely. On airline, GP also leads on distributional fidelity, but falls slightly below on retail and banking environments.

GP's throughput (per-trajectory latency 41-118s, full latency-cost analysis in Appendix~\ref{app:throughput}) is considerably lower than offline synthesizers on a per-row basis; however, this comparison conflates distinct use cases. Offline synthesizers require seed data that is absent in enterprise use cases, and using production PII to train them creates a compliance double-bind~\cite{Dwork2006,Abadi2016}.

Ablating the enforcement loop entirely, feeding the same backbone (Claude Opus 4.5) the raw schema and seed rows and inserting its output directly into the database, bypassing the API/policy layer, drops VPR to 0.846/0.820 on airline/HR. This confirms that valid-by-construction data comes from the simulation loop rather than the LLM alone (full table and analysis in Appendix~\ref{app:llm-direct}).


\begin{figure}[t]
  \centering
  \includegraphics[width=0.75\linewidth]{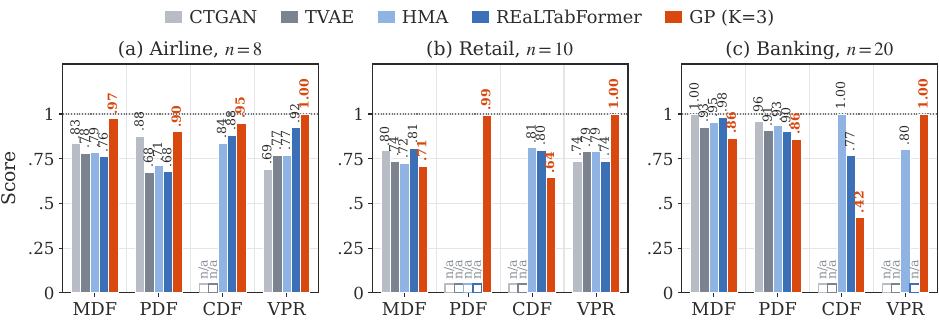}
  \caption{Statistical synthesizers vs.\ GP ($K\!=\!3$) on the three environments
           where seed data exists. Single-table methods (CTGAN, TVAE) produce no CDF by design.}
  \label{fig:baselines}
\end{figure}


\begin{figure}[t]
  \begin{minipage}[b]{0.65\linewidth}
    \centering
    \scriptsize
    \setlength{\tabcolsep}{3pt}
    \begin{tabular}{llccc}
      \toprule
      Env. & System & MDF & PDF & CDF \\
      \midrule
      Airline
        & GP $K\!=\!3$ & \textbf{0.973} [0.935, 1.000] & \textbf{0.900} [0.898, 0.902] & \textbf{0.949} [0.923, 0.975] \\
        & EnvScaler    & n/a & n/a & n/a \\
      \midrule
      Cloud
        & GP $K\!=\!3$ & \textbf{0.850} [0.707, 0.943] & \textbf{0.825} [0.765, 0.847] & 0.610 [0.311, 0.757] \\
        & EnvScaler    & 0.828 [0.644, 0.939] & 0.821 [0.773, 0.839] & \textbf{0.649} [0.529, 0.729] \\
      \midrule
      HR
        & GP $K\!=\!3$ & 0.939 [0.939, 0.939] & --- & 0.505 [0.505, 0.505] \\
        & EnvScaler    & 0.939 [0.939, 0.939] & --- & \textbf{0.509} [0.509, 0.509] \\
      \bottomrule
    \end{tabular}
    \vspace{3pt}
    \captionof{table}{GP ($K\!=\!3$) vs.\ EnvScaler ($N\!=\!100$); 95\% bootstrap CIs.
               \textbf{n/a}: EnvScaler generated no rows (82\% zero-write trajectories).
               ---: no PDF column pair registered.}
    \label{tab:envscaler}
  \end{minipage}%
  \hfill
  \begin{minipage}[b]{0.32\linewidth}
    \centering
    \includegraphics[width=\linewidth]{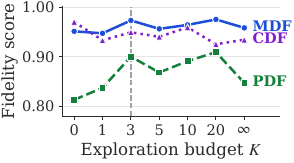}
    \captionof{figure}{Distributional fidelity vs.\ exploration budget $K$ on airline
               ($N\!=\!100$ per condition). $K$ values are evenly spaced for readability.}
    \label{fig:k-ablation}
  \end{minipage}
\end{figure}

\subsection{GP vs.\ Schema-Privileged Synthesis}
\label{sec:envscaler}

We explore the impact of GP's drastically reduced requirements on seed data, schemas, and per-domain authoring on data quality; Table~\ref{tab:envscaler} compares GP ($K\!=\!3$) against EnvScaler on three
environments (airline, cloud, HR; $N\!=\!100$). EnvScaler~\cite{Song2026envscaler} receives the full database schema, live state, and business rules, while GP receives only API tool descriptions.


On airline, EnvScaler produces almost no data (82\% zero-write trajectories) due to two distinct causes (example in Appendix~\ref{app:envscaler-example}): (i) a \emph{stale-ID collision artifact}: schema-grounded task descriptions embed specific entity IDs at task-generation time that become stale by execution time, causing hard failures on the first write call; and (ii) the genuine operational brittleness of entity-specific task planning against evolving shared state. We attribute the bulk of the airline failures to the former artifact and do not claim the full 82\% as a capability gap. On cloud, CDF favours EnvScaler (0.649 vs.\ 0.610), as schema knowledge directly encodes the \texttt{vpcs\,$\to$\, subnets} FK hierarchy. On HR, distributional quality difference is minimal (MDF tied; CDF 0.505 vs.\ 0.509) despite EnvScaler's informational advantage. Such results suggest that schema access improves relational depth only where it directly encodes FK structure; it does not improve marginal or pairwise fidelity, and can actively harm execution. Notably, \textbf{GP achieves this without any access to the database schema, matching a fully schema-privileged baseline}.


\subsection{Ablation Study}
\label{sec:ablation}

\noindent\textbf{Exploration budget ($K$).}
Figure~\ref{fig:k-ablation} shows how MDF, PDF, and CDF evolve as the exploration budget $K$ increases on airline ($N=100$; full numerical results in Appendix~\ref{app:k-analysis}).
PDF shows the strongest sensitivity, rising sharply from $K=0$ (0.813) to $K=3$ (0.900) then flattening, suggesting that three exploration trajectories suffice to surface the primary entity relationships.
MDF remains largely flat (0.95--0.98 with peak at $K=3$), confirming that marginal structure is captured even without prior exploration. CDF remains flat through $K=10$, then dips at larger budgets as over-diversified exploration shifts child-count distributions away from the reference prior. $K=3$ is the practical sweet spot: it captures the full PDF gain with no systematic benefit from additional exploration.

\noindent\textbf{\textsc{Plan} phase.}
We disable \textsc{Plan} on airline ($K=3$, $N=50$), replacing the distribution-targeted task description with a generic populate directive (full analysis in Appendix~\ref{app:plan-ablation}).
All fidelity metrics drop (MDF $-$0.065, PDF $-$0.062, CDF $-$0.071) \textit{despite} the agent writing substantially more data per trajectory (16.7 bookings vs.\ 0.61 with planning).
The cause is distributional skew: without a scoped task the agent bulk-iterates over all users in sequence, systematically under-representing compositionally complex intents, round-trip proportion collapses from 52.5\% to 7.9\% against a 55\% reference prior.
Higher write volume does not compensate for distributional divergence; \textsc{Plan} is what steers each trajectory toward a specific intent and its associated attribute distribution, not merely toward maximum throughput.

\noindent\textbf{Heuristic extraction.}
As part of this, we disable heuristic extraction and injection in \textsc{Execute} phase for three environments ($K=3$, $N=100$; full results in Appendix~\ref{app:heuristic-ablation}).
Heuristics deliver value through three distinct mechanisms: \textbf{workflow productivity} in banking, where their absence reduces WO by 33\% and PDF by 0.175 as 69\% of trajectories collapse to a stereotyped three-step sequence; \textbf{operational reliability} in HR, where zero-write failure rises from 0.07 to 0.12 without heuristics preventing dead-end paths; and \textbf{trajectory efficiency} in airline, where Calls/WO nearly doubles ($+$96\%) without a ``list existing reservations'' shortcut.
Each environment exposes a different failure mode, confirming that heuristic extraction is load-bearing across diverse workflow types.

\subsection{Downstream Agent Training}
\label{sec:downstream}

One of the primary use cases of such a simulated environment is to use it as a realistic starting state for downstream agent training. In enterprise environments, using production environments for training is often restricted due to PII violation concerns. STS enables such agent training. To demonstrate this utility concretely, we test whether agent training performed on the simulated environment snapshot yields better-performing agents than data generated from the pre-populated seed data snapshot, evaluated on the $\tau^2$-bench~\cite{barres2025tau2} airline benchmark.

\begin{wraptable}{r}{0.62\linewidth}
  \vspace{-10pt}
  \centering
  \setlength{\tabcolsep}{2pt}
  \caption{TED evaluation~\cite{chong2026talk} on the $\tau^2$-bench~\cite{barres2025tau2} airline benchmark across four LLM backbones.}
  \label{tab:downstream}
  \footnotesize
  \begin{tabular}{llccc}
    \toprule
    Backbone & Condition & Mean Prog & Mean AUC & Mean PPT \\
    \midrule
    \multirow{3}{*}{Qwen2.5-3B-Instruct~\citep{qwen2.5}}
      & Base       & 0.165 & 0.120 & 0.062 \\
      & SFT (STS)  & \textbf{0.178} & \textbf{0.149} & \textbf{0.077} \\
      & SFT (Seed) & 0.164 & 0.010 & 0.036 \\
    \midrule
    \multirow{3}{*}{Qwen3.5-4B~\citep{qwen3.5}}
      & Base       & 0.790 & 0.680 & 0.284 \\
      & SFT (STS)  & \textbf{0.790} & \textbf{0.680} & \textbf{0.268} \\
      & SFT (Seed) & 0.753 & 0.630 & 0.250 \\
    \midrule
    \multirow{3}{*}{Qwen3.5-9B~\citep{qwen3.5}}
      & Base       & 0.830 & 0.720 & 0.260 \\
      & SFT (STS)  & \textbf{0.850} & \textbf{0.740} & \textbf{0.280} \\
      & SFT (Seed) & 0.810 & 0.630 & 0.250 \\
    \midrule
    \multirow{3}{*}{Gemma-4-E2B-it~\citep{gemmateam2026gemma4}}
      & Base       & 0.234 & 0.180 & 0.080 \\
      & SFT (STS)  & \textbf{0.281} & 0.230 & 0.104 \\
      & SFT (Seed) & 0.249 & \textbf{0.247} & \textbf{0.120} \\
    \bottomrule
  \end{tabular}
  \vspace{-6pt}
\end{wraptable}


\noindent\textbf{Setup.}
Using the airline environment, we generate multi-turn rollouts with GPT-4.1 following TED's dynamic user-simulation methodology~\cite{chong2026talk} from two environment snapshots: STS-populated and pre-populated seed. Sampling 20 tasks $\times$ 3 rollouts and retaining completions with progress=1.0 yields 19 SFT (STS) and 18 SFT (Seed) training samples for fine-tuning four LLM backbones, evaluated on the $\tau^2$-bench airline domain.

\noindent\textbf{Results.}
Table~\ref{tab:downstream} shows that SFT (STS) outperforms SFT (Seed) in \textbf{10 of 12 model--metric pairs (${\sim}83\%$)}. The two exceptions are both on Gemma-4-E2B-it~\citep{gemmateam2026gemma4} --- Mean AUC ($-0.017$) and Mean PPT ($-0.016$) --- with margins $\le 0.017$; on every other backbone--metric combination STS wins outright. The consistency of this advantage across four backbones of different families and scales is consistent with the richer, more diverse database states produced by STS translating into improved agent capability.

\section{Conclusion}
\label{sec:conclusion}

We introduced STS, a paradigm guaranteeing structural validity by construction, and GP, a zero-authoring agent requiring no schema access or per-environment configuration.
Across ten enterprise environments, STS achieves VPR\,=\,1.0 by construction across all
synthesizer variants, and GP matches or exceeds the schema-privileged EnvScaler baseline
despite a strictly more constrained information budget. Agents fine-tuned on STS-populated snapshots outperform those trained on pre-populated seed snapshots in 10 of 12 model--metric pairs across four backbones, supporting the downstream utility of STS snapshots.

\noindent\textbf{Limitations \& Future Work.}
GP's fidelity can be sensitive to model quality and the two-tier distribution
mechanism introduces model-specific biases for unanchored fields.
STS's validity guarantee is conditional on the API enforcement layer being complete with respect to the environment's business rules; deploying STS against a real enterprise system requires that all relevant constraints be encoded at the API boundary, which might demand significant engineering effort. Our scope covers relational, API-mediated databases with explicit validation logic;
unstructured content and probabilistic enforcement remain out of scope.
Natural extensions include intent-weight calibration via lightweight environment profiling to close the residual CDF gap and
cross-environment heuristic transfer via tool-signature similarity.

\bibliographystyle{plain}
\bibliography{references}


\appendix


\section{Enterprise Environments}
\label{app:environments}

The domains are deliberately varied in complexity and subject matter: at one end, Forum covers straightforward thread-post-moderation semantics with 10 tools and 10 validation checks; at the other, HR encompasses payroll integrity, organisational hierarchy, time-off entitlements, and staffing policies encoded across 30 tools and 61 validation checks. The schema-inaccessibility constraint is a deliberate design choice, not a limitation. Enterprise schemas encode proprietary domain models and are routinely subject to the same data-governance controls as the data they structure; in many enterprise deployments, schema access requires the same review as production data access. By requiring the synthesizer to operate from the tool registry alone, the framework reflects the actual information boundary available to an external agent in a real enterprise deployment, and the resulting synthesizer operates within the same interface contract as production tool-calling agents.

\begin{table}[h]
  \centering
  \caption{The ten enterprise environments used in this work.
           Each exposes a typed tool registry and a validation suite of
           programmatic business-rule checks.
           The database schema is not accessible to the synthesizer.}
  \label{tab:environments}
  \begin{tabular}{lrrr}
    \toprule
    Domain & Tables & Tools & Validation Checks \\
    \midrule
    HR                    & 71 & 30 & 61 \\
    Airline Reservations  &  9 & 16 & 13 \\
    IT Service Management &  2 & 13 & 10 \\
    Banking               &  3 & 10 & 10 \\
    E-commerce            &  4 & 11 &  9 \\
    Retail                &  8 & 14 & 19 \\
    University            &  4 &  9 & 12 \\
    Pharmacy              &  4 &  9 &  9 \\
    Forum                 &  6 & 10 & 10 \\
    Cloud                 &  6 & 29 &  6 \\
    \bottomrule
  \end{tabular}
\end{table}



\section{Graph Similarity Metrics for Manifest Comparison}
\label{app:graph-similarity}

To quantify how closely the entity dependency graph inferred by GP on the obfuscated airline environment (Figure~\ref{fig:manifest-obfuscated-airline}) matches the graph inferred on the semantic airline environment (Figure~\ref{fig:manifest-airline}), we use two label-independent structural metrics.

\paragraph{Graph Edit Distance (GED).}
Given two graphs $G_1 = (V_1, E_1)$ and $G_2 = (V_2, E_2)$, the \emph{Graph Edit Distance}~\cite{Sanfeliu1983} is the minimum total cost of an edit sequence that transforms $G_1$ into $G_2$:
\begin{equation}
  \mathrm{GED}(G_1, G_2) = \min_{\gamma \,\in\, \Gamma(G_1, G_2)} \sum_{(a \to b)\,\in\,\gamma} c(a \to b),
  \label{eq:ged}
\end{equation}
where $\Gamma(G_1, G_2)$ is the set of all edit paths between $G_1$ and $G_2$, and each edit operation $a \to b$ is one of: insert node, delete node, substitute node label, insert edge, delete edge, or substitute edge label, each with cost $c(\cdot)$.

Since we compare graphs with entirely different node labels (semantic vs.\ opaque identifiers), we set all substitution costs to zero and compare topology only.  We additionally project both directed graphs to their undirected counterparts before computing GED, so that the metric counts only \emph{edge insertions and deletions}:
\begin{equation}
  c(\text{node sub}) = c(\text{edge sub}) = 0, \quad
  c(\text{node ins/del}) = c(\text{edge ins/del}) = 1.
  \label{eq:ged-costs}
\end{equation}
Under these costs, GED equals the size of the symmetric difference of the edge sets of the two unlabeled graphs.  A value of 0 indicates the graphs are structurally identical (isomorphic); each unit increase reflects one additional edge that must be inserted or deleted.

For graphs of the size studied here ($|V| = 11$), exact GED is computed via the branch-and-bound algorithm implemented in NetworkX~\cite{networkx}.

\paragraph{Normalized GED.}
To make GED scale-independent across graphs of different sizes, we normalize by the edge count of the larger graph:
\begin{equation}
  \mathrm{GED}_{\mathrm{norm}}(G_1, G_2) = \frac{\mathrm{GED}(G_1, G_2)}{\max\!\left(|E_1^{u}|,\, |E_2^{u}|\right)},
  \label{eq:ged-norm}
\end{equation}
where $|E_i^{u}|$ denotes the edge count of the undirected projection of $G_i$.  The normalized GED lies in $[0, 1]$: a value of 0 indicates the two graphs are topologically identical, and a value of 1 indicates no structural overlap.

\paragraph{Results.}
Applying these metrics to the airline and obfuscated airline manifest graphs (both $|V|=11$) yields $\mathrm{GED}=2$ and $\mathrm{GED}_{\mathrm{norm}}=0.154$.  The two graphs therefore differ by only 2 edge edits: the obfuscated graph contains edges from the passenger entity (\texttt{object\_d}) to the booking (\texttt{object\_c}), which GP inferred from argument schemas in the absence of semantic cues.



\section{Metric Definitions}
\label{app:metrics}

\subsection{Common Foundation: JSD-Based Similarity}

All three distributional metrics (MDF, PDF, CDF) build on the same foundation.

\textbf{Baseline subtraction.}
Each metric measures only the records \emph{added} by the agent, not rows present in
the pre-run seed snapshot.
For a field value $v$:
\begin{equation}
  c_{\text{agent}}(v) = \max\!\bigl(0,\; c_{\text{live}}(v) - c_{\text{snapshot}}(v)\bigr),
  \label{eq:baseline}
\end{equation}
where $c_{\text{live}}(v)$ is the count of $v$ in the live database after the synthesis run
and $c_{\text{snapshot}}(v)$ is the count in the snapshot taken immediately before.
The $\max(0,\cdot)$ clips seed values that are never overwritten.
This formulation assumes seed rows are not modified or deleted during synthesis, which holds in the STS setting because GP operates exclusively through creation-oriented API workflows; the $\max(0,\cdot)$ guard handles the rare case where a seed row is removed.
For structural metrics (VPR, CDF), row identity is tracked exactly via primary-key set subtraction ($\text{PKs}_{\text{live}} \setminus \text{PKs}_{\text{snapshot}}$), which is immune to modification artefacts.
The observed agent distribution is then $\hat{P}_{\text{agent}}(v) = c_{\text{agent}}(v) / \sum_{v'} c_{\text{agent}}(v')$.

\textbf{JSD-based similarity.}
Let $P$ and $Q$ be discrete distributions over the same finite support $\mathcal{V}$.
The Jensen-Shannon Divergence~\cite{Lin1991jsd} is:
\begin{equation}
  \mathrm{JSD}(P \,\|\, Q) = \tfrac{1}{2}\,\mathrm{KL}(P \,\|\, M) + \tfrac{1}{2}\,\mathrm{KL}(Q \,\|\, M),
  \label{eq:jsd}
\end{equation}
where $M = \tfrac{1}{2}(P+Q)$ is the mixture and $\mathrm{KL}(P\|Q) = \sum_v P(v)\ln\frac{P(v)}{Q(v)}$.
JSD is symmetric, always finite (the mixture $M$ is in the support of both $P$ and $Q$),
and bounded: $0 \leq \mathrm{JSD}(P\|Q) \leq \ln 2$.
We normalise to a similarity in $[0,1]$:
\begin{equation}
  \mathrm{sim}(P, Q) = 1 - \frac{\mathrm{JSD}(P \,\|\, Q)}{\ln 2},
  \label{eq:sim}
\end{equation}
where $\mathrm{sim}=1$ means identical distributions and $\mathrm{sim}=0$ means
maximally divergent (disjoint supports).

\textbf{Bootstrap confidence intervals.}
All three distributional metrics report 95\% CIs via non-parametric bootstrap
resampling ($B=1000$, $\alpha=0.05$).
For a single field with $n$ agent rows: resample $n$ values with replacement,
recompute similarity, repeat $B$ times, report the $[\alpha/2, 1{-}\alpha/2]$ quantiles.
The overall metric score (mean of per-field similarities) uses the same procedure
applied to the vector of per-field point estimates when two or more fields are registered;
this between-field resampling captures heterogeneity across registered columns and
produces narrow CIs only when all per-field similarities are uniformly high, reflecting
genuine distributional agreement rather than an artefact of the resampling procedure.
When only one field is registered for a metric, the bootstrap resamples the underlying
generated rows directly (per-field CI), so the confidence interval correctly reflects
row-level sampling uncertainty rather than returning a degenerate zero-width interval.
A small number of entries in Table~\ref{tab:df-main} report zero-width CIs
(e.g., Forum MDF\,=\,0.945\,[0.945,\,0.945]; HR MDF\,=\,0.939\,[0.939,\,0.939]).
These are not artefacts of the bootstrap procedure: they arise when all generated values
for the registered field collapse to a single category (e.g., all Forum users receive
role \texttt{member}), making the per-field similarity invariant under row-level
resampling. The zero-width CI is itself informative --- it signals that the agent
produces a degenerate, low-entropy distribution for that field, which is reflected in
the corresponding MDF or CDF score (comparison to reference distribution)

\subsection{Marginal Distribution Fidelity (MDF)}

MDF measures per-column realism: does the agent's value distribution for each
registered column match the reference marginal?
For a categorical column $(t,c)$ with reference distribution $P_{\mathrm{ref}}$:
\begin{equation}
  \mathrm{MDF}_{t,c} = \mathrm{sim}(\hat{P}_{\text{agent}},\, P_{\mathrm{ref}}),
  \qquad
  \mathrm{MDF} = \frac{1}{|\mathcal{F}|}\sum_{(t,c)\,\in\,\mathcal{F}} \mathrm{MDF}_{t,c}.
  \label{eq:mdf-app}
\end{equation}
For numeric columns, $\mathrm{MDF}_{t,c} = 1 - D_{n,m}$, where $D_{n,m}$ is the
two-sample Kolmogorov–Smirnov statistic~\cite{Kolmogorov1933,Smirnov1948} between the agent sample and the reference sample.

\subsection{Pairwise Distribution Fidelity (PDF)}

PDF measures joint within-table realism for registered column pairs $(c_1, c_2)$.
Agent and reference joint distributions are computed via the same baseline subtraction
applied to the joint count $c(v_1, v_2)$.
Reference joint distributions satisfy the marginal consistency condition
$\sum_{v_2} P_{\mathrm{ref}}(v_1,v_2) = P_{\mathrm{MDF}}(v_1)$, ensuring PDF
measures correlation structure above and beyond what MDF already captures:
\begin{equation}
  \mathrm{PDF} = \frac{1}{|\mathcal{P}|}\sum_{(t,c_1,c_2)\,\in\,\mathcal{P}}
    \mathrm{sim}(\hat{P}^{\mathrm{joint}}_{\text{agent}},\, P^{\mathrm{joint}}_{\mathrm{ref}}).
  \label{eq:pdf-app}
\end{equation}

\subsection{Cross-table Distribution Fidelity (CDF)}

CDF lifts measurement to the relational structure: for each FK relationship
$(T_p \to T_c, k)$, it compares the distribution of child-record counts per
agent-created parent against a reference.
Let $\mathcal{A}$ be the set of parent PKs created by the agent.
For each $p \in \mathcal{A}$, the child count $n_c(p)$ is bucketed as
$n_c^{\mathrm{bucket}}(p) = \min(n_c(p), n_{\max})$, and the observed distribution
$\hat{P}^{\mathrm{child}}_{\text{agent}}$ is the empirical distribution over bucket values.
\begin{equation}
  \mathrm{CDF} = \frac{1}{|\mathcal{R}|}\sum_{(T_p,T_c,k)\,\in\,\mathcal{R}}
    \mathrm{sim}(\hat{P}^{\mathrm{child}}_{\text{agent}},\, P^{\mathrm{child}}_{\mathrm{ref}}).
  \label{eq:cdf-app}
\end{equation}
Counts exceeding $n_{\max}$ fold into the last bucket to prevent the metric from
becoming undefined when the agent creates more children than the reference anticipates.

\subsection{Average Successful Write Operations (WO)}

Let $\mathcal{W}_e$ be the set of write tools for environment $e$.
For a run of $T$ trajectories, the successful write count for trajectory $t$ is
$w_t = |\{i : f_{t,i} \in \mathcal{W}_e \land \epsilon_{t,i} = \mathrm{false}\}|$,
where $\epsilon_{t,i}$ is the error flag for call $i$.
\begin{equation}
  \mathrm{WO} = \frac{1}{T}\sum_{t=1}^{T} w_t.
  \label{eq:wo-app}
\end{equation}
Only calls that successfully change database state are counted; API-layer rejections
(referential integrity, business rule violations) set $\epsilon_{t,j}=\mathrm{true}$ (where $j$ is tool-call index in $t$) and
are excluded.


\section{Registered Fields and Priors per Environment}
\label{app:priors}

A column (or column pair, or FK relationship) is \emph{registered} for evaluation if and only if its reference distribution can be grounded in publicly available data --- industry reports, government statistics, or established benchmarks --- independently of any data generated by the systems under evaluation.
This criterion keeps reference distributions objective and reproducible: they are fixed before any synthesis run and do not depend on the synthesizer being evaluated.
Columns lacking a publicly verifiable reference prior are excluded from metric computation rather than estimated from seed data, which would create a circularity for methods that use seed data for training.

\subsection*{MDF: Registered marginal columns}

The Table~\ref{tab:mdf-priors} lists the columns tracked for marginal distribution fidelity; each column has a reference categorical or numeric distribution defined.

\begin{table}[t]
\centering
\caption{MDF: registered marginal columns per environment.}
\label{tab:mdf-priors}
\small
\begin{tabular}{llp{7cm}}
\toprule
Environment & Table & Columns \\
\midrule
Airline      & \texttt{reservations}           & \texttt{cabin}, \texttt{total\_baggages}, \texttt{insurance}, \texttt{flight\_type} \\
Banking      & \texttt{account}                & \texttt{account\_type} \\
             & \texttt{txn}                    & \texttt{txn\_type} \\
Cloud        & \texttt{instances}              & \texttt{state}, \texttt{instance\_type}, \texttt{region} \\
             & \texttt{images}                 & \texttt{visibility} \\
Ecommerce    & \texttt{shop\_order}            & \texttt{status} \\
             & \texttt{product}                & \texttt{category} \\
Forum        & \texttt{forum\_user}            & \texttt{role} \\
             & \texttt{thread}                 & \texttt{status} \\
HR           & \texttt{emp\_job}               & \texttt{employment\_status} \\
             & \texttt{tm\_time\_off\_request} & \texttt{time\_off\_type\_code} \\
IT Mgmt      & \texttt{it\_ticket}             & \texttt{priority}, \texttt{status} \\
Pharmacy     & \texttt{prescription}           & \texttt{status} \\
             & \texttt{medication}             & \texttt{classification}, \texttt{form} \\
Retail       & \texttt{orders}                 & \texttt{status} \\
             & \texttt{payment\_methods}       & \texttt{source} \\
             & \texttt{payment\_history}       & \texttt{transaction\_type} \\
University   & \texttt{enrollment}             & \texttt{status} \\
             & \texttt{student}                & \texttt{enrollment\_status} \\
\bottomrule
\end{tabular}
\end{table}

\subsection*{PDF: Registered column pairs}

The Table~\ref{tab:pdf-priors} lists the within-table column pairs tracked for pairwise joint distribution fidelity; each pair captures a semantically meaningful co-occurrence relationship (e.g., cabin class and insurance uptake).

\begin{table}[t]
\centering
\caption{PDF: registered column pairs per environment.}
\label{tab:pdf-priors}
\small
\begin{tabular}{lll}
\toprule
Environment & Table & Column pair \\
\midrule
Airline   & \texttt{reservations}           & \texttt{cabin} $\times$ \texttt{insurance} \\
          & \texttt{reservations}           & \texttt{cabin} $\times$ \texttt{flight\_type} \\
Banking   & \texttt{account}                & \texttt{account\_type} $\times$ \texttt{overdraft\_allowed} \\
Cloud     & \texttt{instances}              & \texttt{state} $\times$ \texttt{region} \\
HR        & \texttt{tm\_time\_off\_request} & \texttt{time\_off\_type\_code} $\times$ \texttt{request\_status} \\
IT Mgmt   & \texttt{it\_ticket}             & \texttt{priority} $\times$ \texttt{status} \\
Retail    & \texttt{orders}                 & \texttt{status} $\times$ \texttt{cancel\_reason} \\
\midrule
\multicolumn{3}{l}{\textit{Ecommerce, Forum, Pharmacy, University: no pairs registered.}} \\
\bottomrule
\end{tabular}
\end{table}

\subsection*{CDF: Registered FK relationships}

The Table~\ref{tab:cdf-priors} lists the parent--child foreign key relationships tracked for cross-table child-count distribution fidelity; reference distributions encode how many child records are expected per parent (e.g., flights and passengers per reservation).

\begin{table}[b]
\centering
\caption{CDF: registered FK relationships per environment.}
\label{tab:cdf-priors}
\small
\begin{tabular}{llll}
\toprule
Environment & Parent & Child & FK \\
\midrule
Airline   & \texttt{reservations}  & \texttt{reservation\_flights}     & \texttt{reservation\_id} \\
          & \texttt{reservations}  & \texttt{reservation\_passengers}  & \texttt{reservation\_id} \\
Banking   & \texttt{account}       & \texttt{txn}                      & \texttt{account\_id} \\
Cloud     & \texttt{vpcs}          & \texttt{subnets}                  & \texttt{vpc\_id} \\
Ecommerce & \texttt{shop\_order}   & \texttt{order\_item}              & \texttt{order\_id} \\
Forum     & \texttt{thread}        & \texttt{post}                     & \texttt{thread\_id} \\
HR        & \texttt{per\_person}   & \texttt{tm\_time\_off\_request}   & \texttt{person\_id} \\
Retail    & \texttt{orders}        & \texttt{order\_items}             & \texttt{order\_id} \\
          & \texttt{orders}        & \texttt{payment\_history}         & \texttt{order\_id} \\
\midrule
\multicolumn{4}{l}{\textit{IT Mgmt, Pharmacy, University: no FK relationships registered.}} \\
\bottomrule
\end{tabular}
\end{table}


\section{Cross-Environment Exploration Budget Analysis}
\label{app:k-analysis}

\subsection*{Airline ablation: full numerical results}

\begin{table}[t]
  \centering
  \caption{Exploration budget ablation on airline ($N\!=\!100$ per condition;
           $K\!=\!100$ approximates $\infty$). Bracketed values are 95\% bootstrap CIs.
           \textbf{Bold}: best per column.}
  \label{tab:k-ablation}
  \footnotesize
  \begin{tabular}{lccccc}
    \toprule
    $K$ & MDF & PDF & CDF & WO\,(mean\,$\pm$\,std) & zero\_frac\,$\downarrow$ \\
    \midrule
    0   & 0.951 [0.876, 0.998] & 0.813 [0.798, 0.827] & \textbf{0.970} [0.941, 0.999] & 0.86\,$\pm$\,0.68 & 0.28 \\
    1   & 0.947 [0.865, 1.000] & 0.837 [0.833, 0.842] & 0.933 [0.885, 0.981] & 0.92\,$\pm$\,0.27 & 0.08 \\
    3   & 0.973 [0.935, 1.000] & 0.900 [0.898, 0.902] & 0.949 [0.923, 0.975] & 1.07\,$\pm$\,0.74 & 0.08 \\
    5   & 0.956 [0.891, 0.998] & 0.868 [0.862, 0.873] & 0.940 [0.902, 0.978] & \textbf{1.18}\,$\pm$\,1.06 & \textbf{0.07} \\
    10  & 0.964 [0.918, 0.998] & 0.891 [0.891, 0.892] & 0.959 [0.945, 0.974] & 1.08\,$\pm$\,0.66 & 0.08 \\
    20  & \textbf{0.975} [0.935, 0.998] & \textbf{0.909} [0.907, 0.910] & 0.925 [0.875, 0.975] & 1.07\,$\pm$\,0.65 & 0.08 \\
    100 & 0.958 [0.900, 0.998] & 0.847 [0.838, 0.855] & 0.934 [0.894, 0.974] & 1.04\,$\pm$\,0.56 & \textbf{0.07} \\
    \bottomrule
  \end{tabular}
\end{table}

\subsection*{Zero-write failure rate ($\mathit{zero\_frac}$) across all environments}

Table~\ref{tab:zerofrac} reports the fraction of zero-write trajectories for all ten
environments under $K\!\in\!\{0, 3, \infty\}$.
Environments with complex multi-step prerequisites (HR, Pharmacy, University)
have persistently elevated failure rates; exploration reduces but does not eliminate
them. Forum is an outlier where $K\!=\!3$ zero\_frac (0.30) exceeds $K\!=\!0$ (0.13):
exploration lead planner towards diversifying the task distribution sampling toward more complex multi-entity workflows; these tasks carry higher failure rates than the simpler tasks selected without prior context, a standard exploration–exploitation tradeoff. 

\begin{table}[t]
  \centering
  \caption{Zero-write trajectory fraction across all environments and exploration
           budgets ($N\!=\!100$ each). Lower is better.
           $K\!=\!\infty$: exploration on every trajectory (pre-fix runs).}
  \label{tab:zerofrac}
  \footnotesize
  \begin{tabular}{lccc}
    \toprule
    Environment & $K\!=\!0$ & $K\!=\!3$ & $K\!=\!\infty$ \\
    \midrule
    Airline       & 0.28 & 0.08 & 0.07 \\
    IT Mgmt       & 0.18 & 0.09 & 0.08 \\
    Forum         & 0.13 & 0.30 & 0.11 \\
    HR            & 0.32 & 0.07 & 0.11 \\
    Banking       & 0.10 & 0.05 & 0.12 \\
    Cloud         & 0.29 & 0.13 & 0.23 \\
    University    & 0.40 & 0.34 & 0.44 \\
    Pharmacy      & 0.33 & 0.22 & 0.16 \\
    Retail        & 0.20 & 0.00 & 0.00 \\
    E-commerce    & 0.20 & 0.02 & 0.02 \\
    \bottomrule
  \end{tabular}
\end{table}

\subsection*{MDF, WO, and $\mathit{zero\_frac}$ across $K$ and all environments}

\begin{table}[b]
  \centering
  \caption{MDF, WO (mean), and zero\_frac across $K\!\in\!\{0, 3, \infty\}$ for all
           ten environments ($N\!=\!100$ each).
           $K\!=\!\infty$: exploration on every trajectory.}
  \label{tab:k-cross-env}
  \footnotesize
  \setlength{\tabcolsep}{4pt}
  \begin{tabular}{l|ccc|ccc|ccc}
    \toprule
    & \multicolumn{3}{c|}{MDF} & \multicolumn{3}{c|}{WO (mean)} & \multicolumn{3}{c}{zero\_frac\,$\downarrow$} \\
    Environment & $K\!=\!0$ & $K\!=\!3$ & $K\!=\!\infty$ & $K\!=\!0$ & $K\!=\!3$ & $K\!=\!\infty$ & $K\!=\!0$ & $K\!=\!3$ & $K\!=\!\infty$ \\
    \midrule
    Airline     & 0.951 & 0.973 & 0.958 & 0.86 & 1.07 & 1.04 & 0.28 & 0.08 & 0.07 \\
    IT Mgmt     & 0.973 & 0.966 & 0.958 & 1.39 & 1.29 & 2.25 & 0.18 & 0.09 & 0.08 \\
    Forum       & 0.948 & 0.945 & 0.991 & 1.73 & 2.27 & 1.01 & 0.13 & 0.30 & 0.11 \\
    HR          & 0.945 & 0.939 & 0.937 & 1.34 & 3.32 & 5.29 & 0.32 & 0.07 & 0.11 \\
    Banking     & 0.939 & 0.864 & 0.802 & 2.27 & 4.59 & 3.13 & 0.10 & 0.05 & 0.12 \\
    Cloud       & 0.848 & 0.850 & 0.840 & 1.50 & 1.85 & 2.76 & 0.29 & 0.13 & 0.23 \\
    University  & 0.780 & 0.809 & 0.763 & 2.18 & 0.75 & 0.73 & 0.40 & 0.34 & 0.44 \\
    Pharmacy    & 0.916 & 0.804 & 0.945 & 1.11 & 1.80 & 1.18 & 0.33 & 0.22 & 0.16 \\
    Retail      & 0.896 & 0.709 & 0.573 & 3.28 & 5.58 & 2.52 & 0.20 & 0.00 & 0.00 \\
    E-commerce  & 0.943 & 0.983 & 0.743 & 2.90 & 2.56 & 3.33 & 0.20 & 0.02 & 0.02 \\
    \bottomrule
  \end{tabular}
\end{table}

Table~\ref{tab:k-cross-env} extends the airline-only ablation (Table~\ref{tab:k-ablation})
to all ten environments, showing that the $K\!=\!3$ operating point generalises.
The dominant pattern is a WO improvement from $K\!=\!0$ to $K\!=\!3$ in environments
with complex multi-step workflows (HR: $+1.98$; Banking: $+2.32$; Retail: $+2.30$)
without proportional MDF degradation.
MDF degradation under higher $K$ is concentrated in lifecycle-ordered environments
where exploration diversifies task types (Retail: $0.896 \to 0.709 \to 0.573$;
Banking: $0.939 \to 0.864 \to 0.802$), confirming the mechanism described in
Section~\ref{sec:ablation}.


\section{Implementation Details}
\label{app:impl}

All agents use Anthropic's \textbf{Claude 4.5 Opus} via a LiteLLM proxy, temperature $1.0$,
and the LangGraph~\cite{LangGraph2024} agentic loop.
GP is evaluated at $K\!=\!3$ exploration trajectories as the primary configuration;
each environment uses $N\!=\!100$ independent trajectories.
All runs start from a clean database with a fixed seed snapshot.
The \textsc{Plan} phase coverage window, the number of recent trajectories scanned to detect unused write operations, is set to $W = \max(10,\; 2 \times |\mathcal{W}|)$, where $|\mathcal{W}|$ is the number of distinct write tools in the environment; this ensures every write operation is exercised at least once per window regardless of environment size.
Statistical synthesizers use SDV~v1.36.1 (CTGAN \& TVAE were trained for 300 epochs, $N\!=\!100$ generated rows per table; HMA uses Gaussian-copula fitting without an epoch budget, $N\!=\!100$ parent rows) and REaLTabFormer~v0.2.4 ($N\!=\!20$ parent rows with children generated proportionally).



\section{Ablation Tables: Heuristic Extraction and Plan Phase}
\label{app:heuristic-ablation}

Table~\ref{tab:heuristic-ablation} isolates the contribution of heuristic extraction
within \textsc{Execute}.
Removing heuristics has the largest effect in Banking, where mean WO drops by 1.49
(33\%) and tool-call overhead per write rises by 54\%: without distilled retry guidance,
the executor repeatedly rediscovers prerequisite orderings from scratch each trajectory.
Airline shows the converse pattern, WO drops slightly ($-$5\%) but
\textit{Calls/WO} nearly doubles ($+$96\%), meaning the agent still completes writes
but wastes far more API calls doing so, rediscovering the shortcut workarounds for each trajectory.
HR is the outlier: heuristics improve WO marginally ($+$7\%) while slightly
increasing calls per write, suggesting that HR's failure modes are driven by
data-dependency complexity that heuristics partially address but do not eliminate.

Table~\ref{tab:plan-ablation} ablates the \textsc{Plan} phase on the Airline environment.
Removing the plan degrades all three distributional metrics uniformly
($\Delta$MDF $-$0.065, $\Delta$PDF $-$0.062, $\Delta$CDF $-$0.071), confirming
that the structured task manifest is necessary for consistent relational coverage.
The most striking effect is on categorical alignment: without a plan, the fraction of
round-trip bookings collapses from 52.5\% to 7.9\% against a reference prior of
55\%, a 44.6 percentage-point gap with plan enabled GP.
The plan phase is therefore the primary mechanism by which GP targets reference
marginals; without it, the executor defaults to whatever task composition emerges
opportunistically from the current DB state.

\begin{table}[t]
  \centering
  \caption{Heuristic extraction ablation ($K\!=\!3$, $N\!=\!100$). Values shown as
           \textit{with-h} / \textit{no-h}; $\Delta$ is no-h $-$ with-h.
           Calls/WO is total tool calls per successful write (lower = more efficient).}
  \label{tab:heuristic-ablation}
  \setlength{\tabcolsep}{5pt}
  \begin{tabular}{lccccc}
    \toprule
    Env     & WO (with / no)  & $\Delta$WO     & Calls/WO (with / no) & $\Delta$Calls/WO  & $\Delta$zero\_frac \\
    \midrule
    Airline & 1.07 / 1.02     & $-$0.05 (5\%)  & 20.5 / 40.0          & $+$19.6 ($+$96\%) & $-$0.02            \\
    Banking & 4.59 / 3.10     & $-$1.49 (33\%) & 2.81 / 4.33          & $+$1.52 ($+$54\%) & $-$0.04            \\
    HR      & 3.32 / 3.09     & $-$0.23 (7\%)  & 8.00 / 7.10          & $-$0.90 ($-$11\%) & $+$0.05            \\
    \bottomrule
  \end{tabular}
\end{table}

\begin{table}[t]
  \centering
  \caption{Plan phase ablation on airline ($K\!=\!3$, $N\!=\!50$). Reference priors:
           \textit{flight\_type} round\_trip\,=\,55\,\%, one\_way\,=\,45\,\%.}
  \label{tab:plan-ablation}
  \setlength{\tabcolsep}{5pt}
  \begin{tabular}{lcccc}
    \toprule
    Condition & MDF & PDF & CDF & round\_trip\,\% \\
    \midrule
    GP $K\!=\!3$ (with plan)   & 0.973 & 0.900 & 0.949 & 52.5\,\% \\
    GP $K\!=\!3$ (no plan)     & 0.908 & 0.838 & 0.878 & \textbf{7.9\,\%}  \\
    $\Delta$                   & $-$0.065 & $-$0.062 & $-$0.071 & $-$44.6\,pp \\
    \bottomrule
  \end{tabular}
\end{table}


\section{Plan Phase Ablation: Trajectory Analysis (Airline)}
\label{app:plan-ablation}

Table~\ref{tab:plan-traj} contrasts trajectory-level statistics between the full
GP ($K\!=\!3$) and the no-plan variant on airline ($N\!=\!50$).

\begin{table}[t]
  \centering
  \caption{Trajectory-level statistics for the plan-phase ablation (airline,
           $K\!=\!3$, $N\!=\!50$). Reference priors: round\_trip\,=\,55\,\%,
           cabin: economy\,=\,75\,\%, basic\_economy\,=\,15\,\%, business\,=\,10\,\%.}
  \label{tab:plan-traj}
  \setlength{\tabcolsep}{5pt}
  \begin{tabular}{lcccc}
    \toprule
    Condition & Bookings/traj & Users/traj & round\_trip\,\% & business\,\% \\
    \midrule
    GP $K\!=\!3$ (with plan) & 0.61  & 1   & 52.5 & 19.7 \\
    GP $K\!=\!3$ (no plan)   & 16.7  & 10  & 7.9  & 22.9 \\
    \bottomrule
  \end{tabular}
\end{table}

\paragraph{Bulk vs.\ single-task execution.}
With planning, each trajectory corresponds to few users performing few actions
(e.g., booking a round-trip business class flight). Without planning, the agent
iterates over all users and books every available flight for each --- 48 of 50
trajectories process all 10 users in one run. No real enterprise API session
operates this way; the pattern resembles a direct database seeding script rather
than simulated user activity.

\paragraph{One-way skew.}
The most damaging distributional error is the collapse of round-trip bookings
from 52.5\,\% to 7.9\,\% against a reference prior of 55\,\%. A round-trip
booking requires composing two flights (outbound and return) into a single
reservation call. Without an explicit plan instruction such as ``book a
round-trip economy flight'', the agent consistently defaults to the structurally
simpler one-way pattern. The PLAN phase resolves this by embedding the target
attribute value directly into the task description.

\paragraph{Concrete trajectory contrast.} We provide example trajectories each when enabling and disabling planner in GP.
\begin{itemize}
    \item \textsc{With plan} --- task: ``Book a round-trip economy class flight departing from JFK for any available gold-tier user; select whatever dates and routes exist for available flight.'' The agent queries users, finds gold\_alice\_1, searches for JFK outbound and return legs, and calls \texttt{book\_reservation} once with \texttt{flight\_type = round\_trip} and \texttt{cabin = economy}. Total tool calls: 18. Write operations: 1. User: 1.
    \item \textsc{Without plan} --- task: ``Populate the database with realistic, diverse data. Use the available write tools to create new records\ldots''The agent lists all users (10), searches available flights for each, then calls \texttt{book\_reservation} 10--22 times in succession (one for every user/flight combination it discovers) all with \texttt{flight\_type = one\_way} because round-trip requires additional flight composition steps the agent does not spontaneously attempt. Total tool calls: 93 (average). Write operations: 10--22. Users: 10.
\end{itemize}



\section{Computational Cost and Throughput}
\label{app:throughput}

Figure~\ref{fig:throughput-latency} shows per-trajectory wall-clock latency and Figure~\ref{fig:throughput-writes} shows effective write throughput for GP ($K\!=\!3$, $N\!=\!100$) across all ten environments.
Table~\ref{tab:throughput} provides the full numerical breakdown including estimated API cost.
Timing is measured from consecutive \texttt{executed\_at} timestamps recorded in each run's exploration manifest.
Cost is estimated using Claude Opus 4.5 rates as of writing this paper.
For comparison, CTGAN fits in 7--25\,s and HMA/REaLTabFormer in 7--33\,s on the same environments, after which rows are generated in under one second.

\begin{figure}[h]
  \centering
  \begin{subfigure}{0.48\linewidth}
    \includegraphics[width=\linewidth]{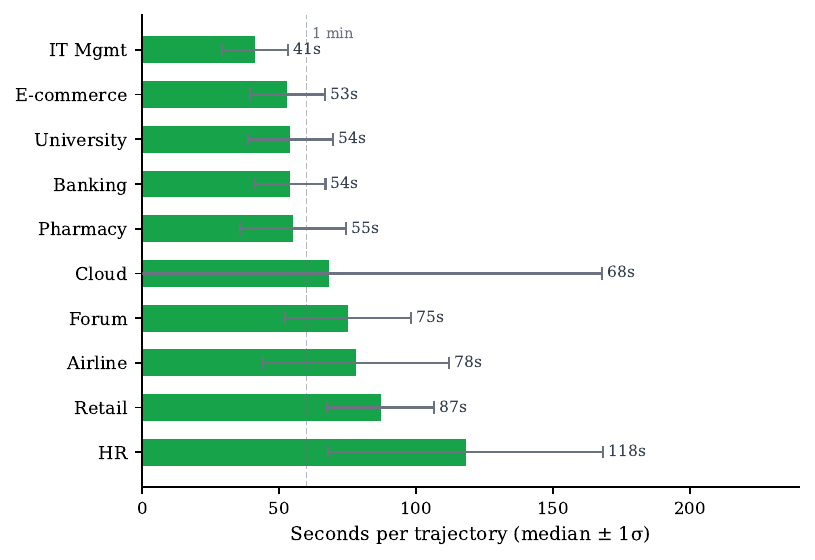}
    \caption{Median per-trajectory latency across all ten environments ($K\!=\!3$, $N\!=\!100$).
             Error bars show $\pm1\sigma$ inter-trajectory variability.
             Environments sorted by median latency.}
    \label{fig:throughput-latency}
  \end{subfigure}
  \hfill
  \begin{subfigure}{0.48\linewidth}
    \includegraphics[width=\linewidth]{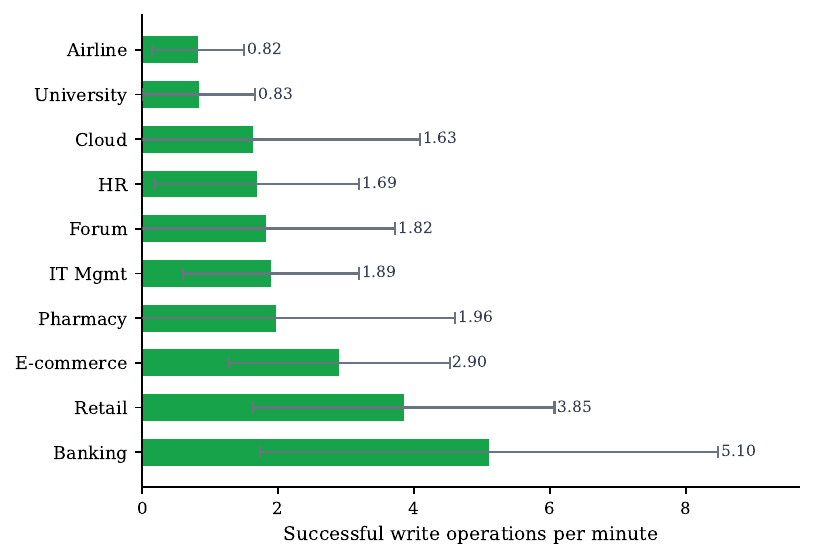}
    \caption{Effective write throughput (WO\,$\div$\,median latency\,$\times\,60$) across all ten environments ($K\!=\!3$, $N\!=\!100$).
             Error bars show propagated uncertainty from WO standard deviation and latency variability.
             Environments sorted by writes per minute.}
    \label{fig:throughput-writes}
  \end{subfigure}
  \caption{GP throughput characterisation across all ten environments.}
  \label{fig:throughput}
\end{figure}



\begin{table}[h]
  \centering
  \caption{GP ($K\!=\!3$, $N\!=\!100$) per-trajectory latency, throughput, and estimated cost across all environments.
           \emph{Writes/min} $=$ WO mean $\div$ median latency $\times\,60$.
           Cost estimated via Claude Opus 4.5 rates \$5/MTok input, \$25/MTok output; blended \$11/MTok.
           Statistical methods (CTGAN, HMA, RTF) fit in 7--33\,s and generate instantaneously
           but are N/A in 7/10 environments due to absent seed data.}
  \label{tab:throughput}
  \footnotesize
  \begin{tabular}{lrrrrrrr}
    \toprule
    Environment & Med.\,s/traj & $\sigma$\,s & Mean\,s & WO/traj & Writes/min & Cost/traj & Cost/write \\
    \midrule
    IT Mgmt       & 41   & 12   & 45   & 1.29 $\pm$ 0.80 & 1.89 & \$0.028 & \$0.022 \\
    E-commerce    & 53   & 14   & 56   & 2.56 $\pm$ 1.27 & 2.90 & \$0.032 & \$0.012 \\
    University    & 54   & 16   & 58   & 0.75 $\pm$ 0.71 & 0.83 & \$0.032 & \$0.042 \\
    Banking       & 54   & 13   & 55   & 4.59 $\pm$ 2.83 & 5.10 & \$0.038 & \$0.008 \\
    Pharmacy      & 55   & 19   & 60   & 1.80 $\pm$ 2.34 & 1.96 & \$0.067 & \$0.037 \\
    Cloud         & 68   & 100  & 87   & 1.85 $\pm$ 1.39 & 1.63 & \$0.027 & \$0.015 \\
    Forum         & 75   & 23   & 79   & 2.27 $\pm$ 2.27 & 1.82 & \$0.172 & \$0.075 \\
    Airline       & 78   & 34   & 86   & 1.07 $\pm$ 0.74 & 0.82 & \$0.083 & \$0.078 \\
    Retail        & 87   & 20   & 90   & 5.58 $\pm$ 2.96 & 3.85 & \$0.128 & \$0.023 \\
    HR            & 118  & 50   & 120  & 3.32 $\pm$ 2.60 & 1.69 & \$0.075 & \$0.023 \\
    \midrule
    CTGAN/HMA   & \multicolumn{3}{c}{7--33\,s (fit only)} & n/a & n/a & n/a & N/A (3/10 envs) \\
    \bottomrule
  \end{tabular}
\end{table}

Several observations are worth noting.
First, environment complexity drives latency: HR (30 tools, 61 checks) takes 2.9$\times$ longer per trajectory than IT Mgmt (13 tools, 10 checks).
The large $\sigma$ for Cloud (100\,s) reflects occasional long-running VPC provisioning trajectories that dominate the tail.
Second, write efficiency and latency are partially decoupled: banking and retail achieve the highest write throughput (5.10 and 3.85 writes/min) despite different latency profiles, because their workflows involve compact multi-step transactions.
Third, the cost per write covers one \emph{fully validated transactional episode}, not one row; an airline write creates 3--6 rows across three tables with all FK constraints satisfied.


\section{Isolating the Simulation Loop: LLM-Direct}
\label{app:llm-direct}

To isolate the contribution of the simulation loop itself, we compare GP against an \emph{LLM-Direct} baseline: the \emph{same} backbone (Claude Opus 4.5) receives the raw schema DDL and seed rows and generates rows that are inserted directly into the database, bypassing the API and its policy enforcement. Table~\ref{tab:llm-direct} reports both methods on airline and HR.

\begin{table}[t]
  \centering
  \caption{LLM-Direct vs.\ STS (GP) on airline and HR ($N\!=\!100$); 95\% bootstrap CIs.
           LLM-Direct inserts rows directly into the database, bypassing the API/policy layer;
           STS generates \emph{through} it with the same backbone. ---: no PDF column pair registered for HR.}
  \label{tab:llm-direct}
  \footnotesize
  \begin{tabular}{llcccc}
    \toprule
    Env. & Method & VPR & MDF & PDF & CDF \\
    \midrule
    \multirow{2}{*}{Airline}
      & LLM-Direct & 0.846 & 0.928 [0.822, 0.998] & 0.813 [0.811, 0.815] & 0.933 [0.892, 0.974] \\
      & STS (GP)   & \textbf{1.000} & 0.973 [0.935, 1.000] & 0.900 [0.898, 0.902] & 0.949 [0.923, 0.975] \\
    \midrule
    \multirow{2}{*}{HR}
      & LLM-Direct & 0.820 & 0.910 [0.884, 0.937] & --- & 0.588 [0.561, 0.613] \\
      & STS (GP)   & \textbf{1.000} & 0.939 [0.939, 0.939] & --- & 0.505 [0.505, 0.505] \\
    \bottomrule
  \end{tabular}
\end{table}

The decisive signal is VPR. Without the API layer, LLM-Direct violates policy constraints in both environments (VPR 0.846 / 0.820), whereas STS attains $\mathrm{VPR}\!=\!1$ by construction. Distributional fidelity is \emph{not} uniformly decisive and we do not claim it as the differentiator: STS is marginally higher on airline, but on HR it actually trails LLM-Direct on CDF (0.505 vs.\ 0.588). This is the one setting where VPR is a competitive result rather than a structural guarantee --- precisely because the baseline bypasses the enforcement layer --- and it shows that the simulation loop contributes exactly what an unconstrained LLM cannot: valid-by-construction data, not merely distribution-matching rows.


\section{Statistical Synthesizer Training Configuration}
\label{app:stat-config}

Statistical synthesizers are applicable only to the three environments that contain
at least one seed row in the pre-run snapshot: \textbf{airline}, \textbf{retail}, and
\textbf{banking}.
The remaining seven environments have zero seed rows and therefore cannot be used to
train any data-driven model; those entries are reported as N/A throughout
Section~\ref{sec:statistical-baselines}.

\textbf{Banking VPR artefact.}
The banking seed snapshot contains customer IDs of the form \texttt{CUST001},
\texttt{CUST002}, \ldots, \texttt{CUST012}.
SDV's CTGAN and TVAE treat these as \emph{incrementable IDs} and re-sample them from
the training pool during synthesis, so generated account rows reference only the
existing twelve customer IDs and trivially satisfy FK-integrity checks.
REaLTabFormer synthesizes fresh account IDs that collide with existing PKs, causing
\texttt{UNIQUE constraint} failures on insert into the snapshot database.
Accordingly, VPR is marked N/A for CTGAN, TVAE, and REaLTabFormer in the banking panel
of Figure~\ref{fig:baselines}.
HMA generates fresh customer rows with novel IDs; its VPR (0.800, 8/10 checks) is
therefore meaningful and is reported.

\textbf{CTGAN and TVAE} (SDV~v1.36.1~\cite{Patki2016sdv}).
Both single-table models are trained independently on each table in the environment.
Training runs for 300 epochs with the default SDV batch size and learning rate.
At inference, $N\!=\!100$ synthetic rows are sampled per table.
Because these models treat each table independently, they have no mechanism for
cross-table relational consistency; CDF is therefore not applicable (N/A).

\textbf{SDV HMA} (SDV~v1.36.1~\cite{Zhang2022sdv}).
HMA is the SDV hierarchical multi-table synthesizer.
It traverses the foreign-key graph top-down, fitting a GaussianCopula per table and
a child-count model per relationship.
Unlike GAN-based synthesizers, HMA employs maximum-likelihood copula fitting without
an epoch budget; $N\!=\!100$ parent rows are sampled at inference.
CDF is applicable because HMA explicitly models parent--child cardinalities.

\textbf{REaLTabFormer} (v0.2.4~\cite{Solatorio2023}).
REaLTabFormer is a relational transformer that autoregressively generates parent rows
and their children.
Due to the model's high per-row computational cost, we generate $N\!=\!20$ parent
rows per environment (with children generated proportionally).
CDF is applicable because child rows are generated conditioned on their parent.

\textbf{Validation.}
All synthesized rows are inserted directly into a clean SQLite database copy via
the \texttt{sqlite3} Python interface, bypassing the API layer.
VPR for statistical synthesizers is measured by applying the same
constraint-check suite used for GP, so a synthesized row that violates a
business rule or referential-integrity constraint is counted as a VPR failure.


\section{EnvScaler vs.\ GP: Trajectory Case Study (Airline)}
\label{app:envscaler-example}

This appendix contrasts a representative EnvScaler trajectory against a representative
GP trajectory on the airline environment to make concrete why schema-privileged task
composition leads to brittle execution.

\subsection*{EnvScaler: multi-subtask task that produces zero writes}

EnvScaler's task generator uses the full database schema and live entity IDs to compose
highly specific, multi-step tasks.
A typical generated task reads:

\begin{quote}
\small
\textit{Gold member Alice Johnson needs several updates. First, cancel her active business
class reservation \texttt{ZFA001}, ensuring all cancellation conditions are satisfied and
refunds are processed to her original payment methods.
Next, for her active round-trip economy reservation \texttt{DF3303}, change both flights to
the next available direct flights on the same route and date, paying or refunding any fare
difference.
After updating the flights, increase the total baggages by 1.
Then, replace the passenger list on \texttt{DF3303} with a new single passenger:
``Emily Johnson'', DOB 2000-09-15.
Finally, if Alice holds more than one active certificate payment method, remove all but
one, then issue her a new travel certificate worth \$150.}
\end{quote}

Because all the EnvScaler tasks are generated independently, two different tasks can have overlapping destructive write operations out of which only one should be possible. Thus in our current example since the previous task already deleted the defined reservation, the agent's first tool call is \texttt{cancel\_reservation} on \texttt{ZFA001} fails immediately: reservation \texttt{ZFA001} has already been cancelled in
the live database (it is present in the seed snapshot but no longer active at execution time).
Because the task is written around specific entity IDs read from the schema at task-generation
time, the agent has no fallback --- the remainder of the trajectory consists of two read calls
(\texttt{get\_reservation\_details}, \texttt{get\_user\_details}) and terminates with
zero net writes.

This failure mode is systematic: 82 of 100 EnvScaler trajectories produce zero net
writes, because any mismatch between the entity state assumed at task-generation time
and the live database state at execution time causes hard failures on the first write
call.

\subsection*{GP: focused single-intent task that succeeds}

GP's Plan phase generates a focused, entity-agnostic task description:

\begin{quote}
\small
\textit{Find any user with available payment methods (credit card or gift cards).
Search for available round-trip economy class flights between any two airports for
upcoming dates.
Book a reservation for 2 passengers without insurance.
After booking, add 1 additional checked bag to the reservation.}
\end{quote}

Because the task references no specific entity IDs, in the \textsc{Execute} phase, GP begins by
querying live state (\texttt{list\_users}, \texttt{get\_user\_details},
\texttt{search\_direct\_flight}) to discover valid entities at execution time.
It then calls \texttt{book\_reservation} and \texttt{update\_reservation\_baggages},
both of which succeed, producing 4 successful write operations.

\subsection*{Summary}

The contrast illustrates a general principle: \emph{schema knowledge enables richer
task specification but couples task prompts to specific entity states}.
In environments where entity state evolves over the synthesis run (reservations are
created, modified, and cancelled by prior trajectories), hard-coded entity IDs become
stale.
GP's entity-agnostic task design is inherently resilient: it discovers valid entities
at execution time rather than assuming them at planning time.


\section{Real-World Enterprise Generalization: Details}
\label{app:real-world}

This appendix provides the system descriptions, run provenance, and per-metric detail
behind Table~\ref{tab:real-world} (Section~\ref{sec:real-world}).

\paragraph{Systems.}
The \textbf{Payroll} system is a proprietary, multi-module enterprise payroll application
(42 registered tools over 13 tables) covering employee compensation, pay runs, deductions,
and disbursement records. Because it is proprietary, it is used as an evaluation target
only and is not part of the released artifacts; no production records from it are used,
retained, or released. \textbf{MintHCM}~\cite{MintHCM} is an open-source Human Capital
Management platform (54 registered tools over 23 tables) covering employees, recruitment
(job positions and candidatures), and organizational structure. For both systems, GP was
given only the tool repository: no schema, no business-rule encodings, no human-authored
write-operation distributions, and no per-system configuration.

\paragraph{Run provenance.}
Both evaluations were run with GP at $K\!=\!3$, $N\!=\!100$ (the paper's primary
configuration), using the Claude Opus 4.5 backbone. The Payroll run was executed on
2026-07-25 and the MintHCM run on 2026-07-26, each from a clean database with a fixed seed
snapshot. The MintHCM source was pinned to the \texttt{master} branch commit checked out at
run time. Both systems attain $\mathrm{VPR}\!=\!1.0$: every trajectory accepted by the system's own API yields a database state that passes that system's validation suite, confirming that the structural guarantee of Proposition~\ref{prop:vpr} is a property of the paradigm and holds on systems GP was never built against.

\paragraph{Interpreting the weaker MintHCM distributional scores.}
MintHCM's PDF (0.344) is dominated by a single high-cardinality registered pair --- the
\texttt{candidatures} joint of recruitment \texttt{status} and application \texttt{source}.
At $N\!=\!100$ this joint is sparsely populated relative to its support, so the empirical
estimate is governed by sampling noise rather than a modeling failure; the effect is the
same data-volume phenomenon analyzed for the authored environments in
Section~\ref{sec:ablation}, amplified here by the larger support of the MintHCM pair.


\section{Backbone Sensitivity: Full Confidence Intervals}
\label{app:backbone}

Table~\ref{tab:backbone-ci} reports the full 95\% bootstrap confidence intervals
($B\!=\!1000$) and record counts behind the point estimates in Table~\ref{tab:backbone}
(Section~\ref{sec:backbone}). Every backbone--environment pair attains $\mathrm{VPR}\!=\!1.0$.

\begin{table}[t]
  \tiny
  \centering
  \setlength{\tabcolsep}{2pt}
  \caption{Backbone sensitivity, full detail: GP ($K\!=\!3$, $N\!=\!100$) with three LLM
           backbones across three environments. Values are point estimates with 95\%
           bootstrap CIs ($B\!=\!1000$). \emph{Records}: new rows generated.
           ---: no column pairs registered for HR (PDF).}
  \label{tab:backbone-ci}
  \scriptsize
  \begin{tabular}{llcccrr}
    \toprule
    Environment & Backbone & MDF & PDF & CDF & \multicolumn{1}{c}{WO} & \multicolumn{1}{c}{Records} \\
    \midrule
    \multirow{3}{*}{Airline}
      & Claude Opus 4.5     & 0.973 [0.935, 1.000] & 0.900 [0.898, 0.902] & 0.949 [0.923, 0.975] & 1.07\,$\pm$\,0.74 & 389 \\
      & Gemini 2.5 Pro      & 0.937 [0.828, 0.991] & 0.685 [0.685, 0.685] & 0.957 [0.916, 0.999] & 0.68\,$\pm$\,0.83 & 137 \\
      & Qwen3.6-35B-A3B-FP8 & 0.904 [0.842, 1.000] & 0.788 [0.759, 0.817] & 0.980 [0.976, 0.983] & 2.05\,$\pm$\,1.84 & 457 \\
    \midrule
    \multirow{3}{*}{HR}
      & Claude Opus 4.5     & 0.939 [0.939, 0.939] & --- & 0.505 [0.505, 0.505] & 3.32\,$\pm$\,2.60 & 756 \\
      & Gemini 2.5 Pro      & 0.940 [0.940, 0.940] & --- & 0.514 [0.514, 0.514] & 2.72\,$\pm$\,3.25 & 436 \\
      & Qwen3.6-35B-A3B-FP8 & 0.937 [0.937, 0.937] & --- & 0.509 [0.509, 0.509] & 8.91\,$\pm$\,11.65 & 1087 \\
    \midrule
    \multirow{3}{*}{Cloud}
      & Claude Opus 4.5     & 0.850 [0.707, 0.943] & 0.825 [0.765, 0.847] & 0.610 [0.311, 0.757] & 1.85\,$\pm$\,1.39 & 102 \\
      & Gemini 2.5 Pro      & 0.831 [0.678, 0.923] & 0.838 [0.751, 0.859] & 0.793 [0.333, 1.000] & 10.60\,$\pm$\,48.29 & 120 \\
      & Qwen3.6-35B-A3B-FP8 & 0.939 [0.903, 0.970] & 0.837 [0.784, 0.853] & 0.504 [0.298, 0.639] & 2.55\,$\pm$\,4.71 & 114 \\
    \bottomrule
  \end{tabular}
\end{table}

The Gemini 2.5 Pro Cloud WO mean (10.60) carries a large standard deviation (48.29): the
mean is skewed by a small number of long trajectories, consistent with high-variance
exploration rather than instability (the run still attains $\mathrm{VPR}\!=\!1.0$ and
produces 120 valid records). Across all nine backbone--environment pairs the quantitative
variation is comparable in magnitude to the cross-environment spread already present in
Table~\ref{tab:df-main}, and reflects differences in model pretraining rather than a
structural fragility of STS.


\section{Broader Impacts}
\label{app:broader-impacts}

STS could be used to produce training data for agents that operate against enterprise systems without authorization. Here the marginal risk is bounded by the structure of the method. STS requires write access to a live, policy-enforcing API to generate data; any actor with that access already has at least read access to the system's records, so STS confers no data-access capability the actor did not already possess. Moreover, because every accepted trajectory is a \emph{valid}, policy-compliant interaction (by Proposition~\ref{prop:vpr}), training on STS-generated data teaches policy-compliant behavior rather than exploitation or policy evasion. The capability STS adds over the status quo is therefore the convenience of generating realistic valid interaction data, not new access or new attack surface.

\newpage
\section*{NeurIPS Paper Checklist}

\begin{enumerate}

\item {\bf Claims}
    \item[] Question: Do the main claims made in the abstract and introduction accurately reflect the paper's contributions and scope?
    \item[] Answer: \answerYes{}
    \item[] Justification: The abstract and Section~1 state four concrete, verifiable contributions (STS paradigm, GP agent, ten-environment evaluation, open-source release). All claims are directly supported by experimental results in Section~5 and the appendices. Aspirational goals (e.g., future cross-model generalisation) are explicitly mentioned as limitations or future work.
    \item[] Guidelines:
    \begin{itemize}
        \item The answer \answerNA{} means that the abstract and introduction do not include the claims made in the paper.
        \item The abstract and/or introduction should clearly state the claims made, including the contributions made in the paper and important assumptions and limitations. A \answerNo{} or \answerNA{} answer to this question will not be perceived well by the reviewers. 
        \item The claims made should match theoretical and experimental results, and reflect how much the results can be expected to generalize to other settings. 
        \item It is fine to include aspirational goals as motivation as long as it is clear that these goals are not attained by the paper. 
    \end{itemize}

\item {\bf Limitations}
    \item[] Question: Does the paper discuss the limitations of the work performed by the authors?
    \item[] Answer: \answerYes{}
    \item[] Justification: A dedicated paragraph in Section~6 discusses limitations.
    \item[] Guidelines:
    \begin{itemize}
        \item The answer \answerNA{} means that the paper has no limitation while the answer \answerNo{} means that the paper has limitations, but those are not discussed in the paper. 
        \item The authors are encouraged to create a separate ``Limitations'' section in their paper.
        \item The paper should point out any strong assumptions and how robust the results are to violations of these assumptions (e.g., independence assumptions, noiseless settings, model well-specification, asymptotic approximations only holding locally). The authors should reflect on how these assumptions might be violated in practice and what the implications would be.
        \item The authors should reflect on the scope of the claims made, e.g., if the approach was only tested on a few datasets or with a few runs. In general, empirical results often depend on implicit assumptions, which should be articulated.
        \item The authors should reflect on the factors that influence the performance of the approach. For example, a facial recognition algorithm may perform poorly when image resolution is low or images are taken in low lighting. Or a speech-to-text system might not be used reliably to provide closed captions for online lectures because it fails to handle technical jargon.
        \item The authors should discuss the computational efficiency of the proposed algorithms and how they scale with dataset size.
        \item If applicable, the authors should discuss possible limitations of their approach to address problems of privacy and fairness.
        \item While the authors might fear that complete honesty about limitations might be used by reviewers as grounds for rejection, a worse outcome might be that reviewers discover limitations that aren't acknowledged in the paper. The authors should use their best judgment and recognize that individual actions in favor of transparency play an important role in developing norms that preserve the integrity of the community. Reviewers will be specifically instructed to not penalize honesty concerning limitations.
    \end{itemize}

\item {\bf Theory assumptions and proofs}
    \item[] Question: For each theoretical result, does the paper provide the full set of assumptions and a complete (and correct) proof?
    \item[] Answer: \answerYes{}
    \item[] Justification: The paper contains one formal result, Proposition~1 (Structural Validity by Construction, Section~3.2), which states that GP-generated databases satisfy all validation checks by construction. The full proof is provided immediately following the proposition.
    \item[] Guidelines:
    \begin{itemize}
        \item The answer \answerNA{} means that the paper does not include theoretical results. 
        \item All the theorems, formulas, and proofs in the paper should be numbered and cross-referenced.
        \item All assumptions should be clearly stated or referenced in the statement of any theorems.
        \item The proofs can either appear in the main paper or the supplemental material, but if they appear in the supplemental material, the authors are encouraged to provide a short proof sketch to provide intuition. 
        \item Inversely, any informal proof provided in the core of the paper should be complemented by formal proofs provided in appendix or supplemental material.
        \item Theorems and Lemmas that the proof relies upon should be properly referenced. 
    \end{itemize}

\item {\bf Experimental result reproducibility}
    \item[] Question: Does the paper fully disclose all the information needed to reproduce the main experimental results of the paper to the extent that it affects the main claims and/or conclusions of the paper (regardless of whether the code and data are provided or not)?
    \item[] Answer: \answerYes{}
    \item[] Justification: Appendix~\ref{app:impl} specifies the system configuration, Section~5.1 describes the experimental setup. Reference distributions for all environments are provided in Appendix~\ref{app:priors}.
    \item[] Guidelines:
    \begin{itemize}
        \item The answer \answerNA{} means that the paper does not include experiments.
        \item If the paper includes experiments, a \answerNo{} answer to this question will not be perceived well by the reviewers: Making the paper reproducible is important, regardless of whether the code and data are provided or not.
        \item If the contribution is a dataset and\slash or model, the authors should describe the steps taken to make their results reproducible or verifiable. 
        \item Depending on the contribution, reproducibility can be accomplished in various ways. For example, if the contribution is a novel architecture, describing the architecture fully might suffice, or if the contribution is a specific model and empirical evaluation, it may be necessary to either make it possible for others to replicate the model with the same dataset, or provide access to the model. In general. releasing code and data is often one good way to accomplish this, but reproducibility can also be provided via detailed instructions for how to replicate the results, access to a hosted model (e.g., in the case of a large language model), releasing of a model checkpoint, or other means that are appropriate to the research performed.
        \item While NeurIPS does not require releasing code, the conference does require all submissions to provide some reasonable avenue for reproducibility, which may depend on the nature of the contribution. For example
        \begin{enumerate}
            \item If the contribution is primarily a new algorithm, the paper should make it clear how to reproduce that algorithm.
            \item If the contribution is primarily a new model architecture, the paper should describe the architecture clearly and fully.
            \item If the contribution is a new model (e.g., a large language model), then there should either be a way to access this model for reproducing the results or a way to reproduce the model (e.g., with an open-source dataset or instructions for how to construct the dataset).
            \item We recognize that reproducibility may be tricky in some cases, in which case authors are welcome to describe the particular way they provide for reproducibility. In the case of closed-source models, it may be that access to the model is limited in some way (e.g., to registered users), but it should be possible for other researchers to have some path to reproducing or verifying the results.
        \end{enumerate}
    \end{itemize}

\item {\bf Open access to data and code}
    \item[] Question: Does the paper provide open access to the data and code, with sufficient instructions to faithfully reproduce the main experimental results, as described in supplemental material?
    \item[] Answer: \answerYes{}
    \item[] Justification: The full framework, all ten environments, and generated datasets are open-sourced.
    \item[] Guidelines:
    \begin{itemize}
        \item The answer \answerNA{} means that paper does not include experiments requiring code.
        \item Please see the NeurIPS code and data submission guidelines (\url{https://neurips.cc/public/guides/CodeSubmissionPolicy}) for more details.
        \item While we encourage the release of code and data, we understand that this might not be possible, so \answerNo{} is an acceptable answer. Papers cannot be rejected simply for not including code, unless this is central to the contribution (e.g., for a new open-source benchmark).
        \item The instructions should contain the exact command and environment needed to run to reproduce the results. See the NeurIPS code and data submission guidelines (\url{https://neurips.cc/public/guides/CodeSubmissionPolicy}) for more details.
        \item The authors should provide instructions on data access and preparation, including how to access the raw data, preprocessed data, intermediate data, and generated data, etc.
        \item The authors should provide scripts to reproduce all experimental results for the new proposed method and baselines. If only a subset of experiments are reproducible, they should state which ones are omitted from the script and why.
        \item At submission time, to preserve anonymity, the authors should release anonymized versions (if applicable).
        \item Providing as much information as possible in supplemental material (appended to the paper) is recommended, but including URLs to data and code is permitted.
    \end{itemize}

\item {\bf Experimental setting/details}
    \item[] Question: Does the paper specify all the training and test details (e.g., data splits, hyperparameters, how they were chosen, type of optimizer) necessary to understand the results?
    \item[] Answer: \answerYes{}
    \item[] Justification: Section~5.1 and Appendix~\ref{app:impl} specify all relevant implementation details. GP has no training phase; synthesizer hyperparameters (exploration budget $K$, number of trajectories $N$, temperature) are stated explicitly.
    \item[] Guidelines:
    \begin{itemize}
        \item The answer \answerNA{} means that the paper does not include experiments.
        \item The experimental setting should be presented in the core of the paper to a level of detail that is necessary to appreciate the results and make sense of them.
        \item The full details can be provided either with the code, in appendix, or as supplemental material.
    \end{itemize}

\item {\bf Experiment statistical significance}
    \item[] Question: Does the paper report error bars suitably and correctly defined or other appropriate information about the statistical significance of the experiments?
    \item[] Answer: \answerYes{}
    \item[] Justification: All distributional fidelity metrics in Tables~\ref{tab:envscaler} and~\ref{tab:k-ablation} report 95\% bootstrap confidence intervals ($B\!=\!1000$ resamples); the bootstrap procedure is described in Appendix~\ref{app:metrics}.
    \item[] Guidelines:
    \begin{itemize}
        \item The answer \answerNA{} means that the paper does not include experiments.
        \item The authors should answer \answerYes{} if the results are accompanied by error bars, confidence intervals, or statistical significance tests, at least for the experiments that support the main claims of the paper.
        \item The factors of variability that the error bars are capturing should be clearly stated (for example, train/test split, initialization, random drawing of some parameter, or overall run with given experimental conditions).
        \item The method for calculating the error bars should be explained (closed form formula, call to a library function, bootstrap, etc.)
        \item The assumptions made should be given (e.g., Normally distributed errors).
        \item It should be clear whether the error bar is the standard deviation or the standard error of the mean.
        \item It is OK to report 1-sigma error bars, but one should state it. The authors should preferably report a 2-sigma error bar than state that they have a 96\% CI, if the hypothesis of Normality of errors is not verified.
        \item For asymmetric distributions, the authors should be careful not to show in tables or figures symmetric error bars that would yield results that are out of range (e.g., negative error rates).
        \item If error bars are reported in tables or plots, the authors should explain in the text how they were calculated and reference the corresponding figures or tables in the text.
    \end{itemize}

\item {\bf Experiments compute resources}
    \item[] Question: For each experiment, does the paper provide sufficient information on the computer resources (type of compute workers, memory, time of execution) needed to reproduce the experiments?
    \item[] Answer: \answerYes{}
    \item[] Justification: All LLM calls are routed through a hosted API (\texttt{claude-4.5-opus} via LiteLLM proxy); no GPU compute is required for generating data. Appendix~\ref{app:impl} specifies the model, framework, and per-condition batch sizes sufficient for others to estimate reproduction cost. But we also performed downstream agent training, shown in Section~\ref{sec:downstream}, for which we used an 8 A100 40GB compute node.
    \item[] Guidelines:
    \begin{itemize}
        \item The answer \answerNA{} means that the paper does not include experiments.
        \item The paper should indicate the type of compute workers CPU or GPU, internal cluster, or cloud provider, including relevant memory and storage.
        \item The paper should provide the amount of compute required for each of the individual experimental runs as well as estimate the total compute. 
        \item The paper should disclose whether the full research project required more compute than the experiments reported in the paper (e.g., preliminary or failed experiments that didn't make it into the paper). 
    \end{itemize}

\item {\bf Code of ethics}
    \item[] Question: Does the research conducted in the paper conform, in every respect, with the NeurIPS Code of Ethics?
    \item[] Answer: \answerYes{}
    \item[] Justification: The research involves no human subjects, no personal data collection, and no scraping of third-party content. All environments use entirely artificial seed data. The paper complies with the NeurIPS anonymity requirements for the submission phase.
    \item[] Guidelines:
    \begin{itemize}
        \item The answer \answerNA{} means that the authors have not reviewed the NeurIPS Code of Ethics.
        \item If the authors answer \answerNo, they should explain the special circumstances that require a deviation from the Code of Ethics.
        \item The authors should make sure to preserve anonymity (e.g., if there is a special consideration due to laws or regulations in their jurisdiction).
    \end{itemize}

\item {\bf Broader impacts}
    \item[] Question: Does the paper discuss both potential positive societal impacts and negative societal impacts of the work performed?
    \item[] Answer: \answerYes{}
    \item[] Justification: Appendix~\ref{app:broader-impacts} discusses impacts (privacy-preserving research infrastructure).
    \item[] Guidelines:
    \begin{itemize}
        \item The answer \answerNA{} means that there is no societal impact of the work performed.
        \item If the authors answer \answerNA{} or \answerNo, they should explain why their work has no societal impact or why the paper does not address societal impact.
        \item Examples of negative societal impacts include potential malicious or unintended uses (e.g., disinformation, generating fake profiles, surveillance), fairness considerations (e.g., deployment of technologies that could make decisions that unfairly impact specific groups), privacy considerations, and security considerations.
        \item The conference expects that many papers will be foundational research and not tied to particular applications, let alone deployments. However, if there is a direct path to any negative applications, the authors should point it out. For example, it is legitimate to point out that an improvement in the quality of generative models could be used to generate Deepfakes for disinformation. On the other hand, it is not needed to point out that a generic algorithm for optimizing neural networks could enable people to train models that generate Deepfakes faster.
        \item The authors should consider possible harms that could arise when the technology is being used as intended and functioning correctly, harms that could arise when the technology is being used as intended but gives incorrect results, and harms following from (intentional or unintentional) misuse of the technology.
        \item If there are negative societal impacts, the authors could also discuss possible mitigation strategies (e.g., gated release of models, providing defenses in addition to attacks, mechanisms for monitoring misuse, mechanisms to monitor how a system learns from feedback over time, improving the efficiency and accessibility of ML).
    \end{itemize}

\item {\bf Safeguards}
    \item[] Question: Does the paper describe safeguards that have been put in place for responsible release of data or models that have a high risk for misuse?
    \item[] Answer: \answerNA{}
    \item[] Justification: The paper releases a data synthesis framework and ten simulated environments, not pretrained language models or scraped internet data. The released assets pose no high risk for misuse beyond standard software release concerns.
    \item[] Guidelines:
    \begin{itemize}
        \item The answer \answerNA{} means that the paper poses no such risks.
        \item Released models that have a high risk for misuse or dual-use should be released with necessary safeguards to allow for controlled use of the model, for example by requiring that users adhere to usage guidelines or restrictions to access the model or implementing safety filters. 
        \item Datasets that have been scraped from the Internet could pose safety risks. The authors should describe how they avoided releasing unsafe images.
        \item We recognize that providing effective safeguards is challenging, and many papers do not require this, but we encourage authors to take this into account and make a best faith effort.
    \end{itemize}

\item {\bf Licenses for existing assets}
    \item[] Question: Are the creators or original owners of assets used in the paper properly credited and are the license and terms of use explicitly mentioned?
    \item[] Answer: \answerYes{}
    \item[] Justification: All third-party packages are cited: SDV~\cite{Patki2016sdv}, REaLTabFormer~\cite{Solatorio2023}, LangGraph~\cite{LangGraph2024}. The supplementary material lists the licence for each dependency (SDV: Business Source License; REaLTabFormer: MIT; LangGraph: MIT).
    \item[] Guidelines:
    \begin{itemize}
        \item The answer \answerNA{} means that the paper does not use existing assets.
        \item The authors should cite the original paper that produced the code package or dataset.
        \item The authors should state which version of the asset is used and, if possible, include a URL.
        \item The name of the license (e.g., CC-BY 4.0) should be included for each asset.
        \item For scraped data from a particular source (e.g., website), the copyright and terms of service of that source should be provided.
        \item If assets are released, the license, copyright information, and terms of use in the package should be provided. For popular datasets, \url{paperswithcode.com/datasets} has curated licenses for some datasets. Their licensing guide can help determine the license of a dataset.
        \item For existing datasets that are re-packaged, both the original license and the license of the derived asset (if it has changed) should be provided.
        \item If this information is not available online, the authors are encouraged to reach out to the asset's creators.
    \end{itemize}

\item {\bf New assets}
    \item[] Question: Are new assets introduced in the paper well documented and is the documentation provided alongside the assets?
    \item[] Answer: \answerYes{}
    \item[] Justification: The ten simulated enterprise environments and generated datasets are released with the anonymised supplementary code. It includes README describing the step by step method for usage. Dataset documentation follows the standard datasheet format in the supplementary material.
    \begin{itemize}
        \item The answer \answerNA{} means that the paper does not release new assets.
        \item Researchers should communicate the details of the dataset\slash code\slash model as part of their submissions via structured templates. This includes details about training, license, limitations, etc. 
        \item The paper should discuss whether and how consent was obtained from people whose asset is used.
        \item At submission time, remember to anonymize your assets (if applicable). You can either create an anonymized URL or include an anonymized zip file.
    \end{itemize}

\item {\bf Crowdsourcing and research with human subjects}
    \item[] Question: For crowdsourcing experiments and research with human subjects, does the paper include the full text of instructions given to participants and screenshots, if applicable?
    \item[] Answer: \answerNA{}
    \item[] Justification: The paper involves no crowdsourcing and no research with human subjects.
    \item[] Guidelines:
    \begin{itemize}
        \item The answer \answerNA{} means that the paper does not involve crowdsourcing nor research with human subjects.
        \item Including this information in the supplemental material is fine, but if the main contribution of the paper involves human subjects, then as much detail as possible should be included in the main paper. 
        \item According to the NeurIPS Code of Ethics, workers involved in data collection, curation, or other labor should be paid at least the minimum wage in the country of the data collector. 
    \end{itemize}

\item {\bf Institutional review board (IRB) approvals or equivalent for research with human subjects}
    \item[] Question: Does the paper describe potential risks incurred by study participants and whether IRB approvals were obtained?
    \item[] Answer: \answerNA{}
    \item[] Justification: No human subjects are involved in this research.
    \item[] Guidelines:
    \begin{itemize}
        \item The answer \answerNA{} means that the paper does not involve crowdsourcing nor research with human subjects.
        \item Depending on the country in which research is conducted, IRB approval (or equivalent) may be required for any human subjects research. If you obtained IRB approval, you should clearly state this in the paper. 
        \item We recognize that the procedures for this may vary significantly between institutions and locations, and we expect authors to adhere to the NeurIPS Code of Ethics and the guidelines for their institution. 
        \item For initial submissions, do not include any information that would break anonymity (if applicable), such as the institution conducting the review.
    \end{itemize}

\item {\bf Declaration of LLM usage}
    \item[] Question: Does the paper describe the usage of LLMs if it is an important, original, or non-standard component of the core methods in this research?
    \item[] Answer: \answerYes{}
    \item[] Justification: LLM inference is the core computational component of the paper. Section~4 describes how \texttt{claude-4.5-opus} is used in the Explore, Plan, and Execute phases of GP, and Appendix~\ref{app:impl} specifies the exact model version, temperature, and framework.
    \item[] Guidelines:
    \begin{itemize}
        \item The answer \answerNA{} means that the core method development in this research does not involve LLMs as any important, original, or non-standard components.
        \item Please refer to our LLM policy in the NeurIPS handbook for what should or should not be described.
    \end{itemize}

\end{enumerate}

\end{document}